\def\ruleend{.}
\newcommand{\asp}[1]{\mbox{$\mathtt{#1}$}}
\DeclareMathOperator{\codeif}{\mathtt{:-} }
\DeclareMathOperator{\naf}{\;\mathtt{not}\;}
\newcommand{\secref}[1]{Section~\ref{sec:#1}}
\newcommand{\thmref}[1]{Theorem~\ref{thm:#1}}
\newcommand{\tblref}[1]{Table~\ref{tbl:#1}}
\newcommand{\figref}[1]{Figure~\ref{fig:#1}}
\newcommand{\loasne}{ILP_{LOAS}^{noise}}
\newcommand{\answerset}[1]{
  \lbrace$\ForEach{;}{\ifnum\thislevelcount=1 \else ,$ $ \fi \asp{\thislevelitem}}{#1}$\rbrace
}
\theoremstyle{plain}
\newtheorem{theorem}{\protect\theoremname}[section]
\theoremstyle{plain}
\theoremstyle{definition}
\theoremstyle{definition}
\newtheorem{definition}{\protect\definitionname}[section]
\theoremstyle{definition}
\newtheorem*{defn*}{\protect\definitionname}
\theoremstyle{remark}
\newtheorem*{remark*}{\protect\remarkname}
\theoremstyle{remark}
\theoremstyle{plain}
\newtheorem*{example*}{\protect\examplename}
\theoremstyle{plain}
\theoremstyle{definition}
\newtheorem*{prop*}{\protect\propositionname}
\theoremstyle{definition}
\newtheorem{proposition}[theorem]{\protect\propositionname}
\theoremstyle{definition}
\newtheorem*{lemma*}{\protect\lemmaname}
\theoremstyle{definition}
\newtheorem*{cor*}{\protect\corname}
\theoremstyle{definition}
\providecommand{\definitionname}{Definition}
\providecommand{\metaprogramname}{Meta-program}
\providecommand{\propositionname}{Proposition}
\providecommand{\lemmaname}{Lemma}
\providecommand{\corname}{Corollary}
\providecommand{\examplename}{Example}
\providecommand{\remarkname}{Remark}
\providecommand{\theoremname}{Theorem}
\providecommand{\factname}{Fact}
\begin{document}

\title{Inductive Learning of Answer Set Programs from Noisy Examples}

\author{Mark Law}{mark.law09@imperial.ac.uk}
\author{Alessandra Russo}{a.russo@imperial.ac.uk}
\author{Krysia Broda}{k.broda@imperial.ac.uk}
\address{Department of Computing, Imperial College London, London, SW7 2AZ, United Kingdom}

\vskip 0.2in

\begin{abstract}
  In recent years, non-monotonic Inductive Logic Programming has received growing
interest. Specifically, several new learning frameworks and algorithms have
been introduced for learning under the answer set semantics, allowing the
learning of common-sense knowledge involving defaults and exceptions, which are
essential aspects of human reasoning.  In this paper, we present a
noise-tolerant generalisation of the \emph{learning from answer sets}
framework.  We evaluate our ILASP3 system, both on synthetic and on real
datasets, represented in the new framework. In particular, we show that on many
of the datasets ILASP3 achieves a higher accuracy than other ILP systems that
have previously been applied to the datasets, including a recently proposed
differentiable learning framework.

\end{abstract}

\section{Introduction}

The ultimate aim of cognitive systems is to achieve human-like intelligence. As
humans, we are capable of performing many cognitive activities such as learning
from past experience, predicting outcomes of our actions based on what we have
learned, and reasoning using our learned knowledge. Each of these cognitive
processes uses existing knowledge and generates new knowledge. They are
underpinned by our ability to perform {\em inductive reasoning}, one of our
most important high-level cognitive functions. Inductive reasoning is a complex
process by which new knowledge is inferred from a series of observations in a
way that can be {\em transferred} from past experiences to new situations. When
performing inductive reasoning, observations perceived through the environment
are often noisy and the existing knowledge that we use during the reasoning
process is also limited and incomplete. The human inductive reasoning process
is therefore capable of handling noise in the observations, reasoning with
incomplete and defeasible knowledge, applying knowledge learned in one scenario
to many other scenarios, and learning complex knowledge expressed in terms of
rules, constraints and preferences that can be communicated to others.

To realise cognitive systems able to perform human-like inductive reasoning,
Machine Learning (ML) solutions have to meet the above properties. Research in
ML has yielded approaches and systems that, although capable of identifying
patterns in datasets consisting of millions of (noisy) data points, cannot
express the learned knowledge in a form that could be understood by a human.
Moreover, their learned knowledge can only be used in exactly the scenario in
which it was learned: for example, a system trained to play Go on a standard
\texttt{19x19} board may not perform very well at Go played on a \texttt{20x20}
board. Lack of interpretability and transferability of the learned knowledge
make these approaches far from human cognition. On the other hand, Inductive
Logic Programming (ILP~\cite{Muggleton1991}) has been shown to be suited for
learning knowledge that can be understood by humans and applied to new
scenarios. Although approaches for performing ILP in the context of noisy
examples have been presented in the literature (e.g.\
\cite{sandewall1993handling,McCreath1997,hyper_n}), many existing ILP systems
can only learn knowledge expressed as definite logic programs, so they are not
capable of learning common-sense knowledge involving defaults and exceptions,
which are essential aspects of human reasoning. This type of knowledge can be
modelled using \emph{negation as failure}.

Recently, ILP has been extended to enable learning programs containing
negation as failure (e.g.\ \cite{ray2009nonmonotonic,Sakama2009}), and
interpreted under the answer set semantics~\citep{Gelfond1988}. In particular,
our recent results in inductive learning of answer
set programs (ILASP, \cite{JELIAILASP,ICLP16}) have demonstrated the ability
to support automated acquisition of complex knowledge structures in the
language of Answer Set Programming (ASP). The theoretical framework
underpinning ILASP, called \emph{Learning from Answer Sets} (LAS), enables the
learning of constraints, preferences and non-deterministic concepts. For
instance, LAS can learn the concept that a coin may non-deterministically land
on either heads or tails, but never both.

When learning, humans are also capable of disregarding information that does
not fit the general pattern. Any cognitive system that aims to mimic
human-level learning should therefore be capable of learning in the presence of
\emph{noisy} data. A realistic approach to cognitive knowledge acquisition is
therefore the learning of knowledge that covers the {\em majority} of the
examples, but which at the same time weights coverage against its
complexity. In this paper, we present a noise tolerant extension of our LAS
framework, \emph{Learning from noisy answer sets} ($ILP_{LOAS}^{noise}$) and
show that our ILASP3 system is capable of learning complex knowledge from noisy
data in an effective and scalable way. A collection of datasets, ranging from
synthetically generated to real datasets, is used to evaluate the performance
of the system with respect to the percentage of noise in the examples and to
compare it to existing ILP systems. Specifically, we consider two classes of
synthetically generated datasets, called Hamiltonian and Journey preferences,
and show that ILASP3 is able in both cases to achieve a high accuracy (of well
over 90\%), even with 20\% of the examples labelled incorrectly.  We also
evaluate ILASP3 on datasets concerning learning event theories~\citep{OLED},
sentence chunking~\citep{chunking_dataset}, preference
learning~\citep{kamishima2010survey,abbasnejad2013learning} and the synthetic
datasets of~\cite{evans2017learning}. Our results show that in most cases the
ability of ILASP3 to compute \emph{optimal} solutions for a given learning task
allows it to reach higher accuracy than the other systems, which do not
guarantee the computation of an optimal solution.

Next, in \secref{background}, we review relevant background material.
\secref{framework} introduces our new framework for learning ASP from noisy
examples; \secref{ilasp} discusses the ILASP algorithms;
Sections~\ref{sec:experiments} and~\ref{sec:rds} present an extensive
evaluation of our ILASP3 system; and finally, we conclude with a discussion of
related and future work.

\section{Background}
\label{sec:background}

We briefly introduce basic notions and terminologies used throughout the paper.
Given any atoms $\asp{h, h_1,\ldots,h_k, b_1,\ldots,b_n,c_1,\ldots,c_m}$, a
{\em normal rule} is of the form $\asp{h \codeif b_1,\ldots, b_n, \naf c_{1},\ldots,}$
$\asp{\naf c_{m}}$, where ``$\mathtt{not}$'' is negation as failure, $h$ is the
head of the rule and $\asp{b_1,\ldots, b_n, \naf c_{1},\ldots, \naf c_{m}}$ is
the body of the rule.  For example, $\asp{fly(X) \codeif bird(X),\naf ab(X)}$
is a normal rule stating that any bird can fly, unless it is abnormal. The
negated condition $\asp{\naf ab(X)}$ is assumed to hold unless there is a way
of proving $\asp{ab(X)}$ for some value of $X$. So, the normal rule essentially
models that by default, birds can fly, unless there is a proof that the bird is
abnormal. ASP programs include three other types of rule: choice rules, hard
and weak constraints. A \emph{choice rule} is of the form $\asp{l\{h_{1},
\ldots, h_{k}\}u\codeif b_1,\ldots, b_n,}$ $\asp{\naf c_{1},\ldots, \naf
c_{m}}$, where $\asp{l}$ and $\asp{u}$ are integers and $\asp{l\{h_{1}, \ldots,
h_{k}\}u}$ is called an \emph{aggregate}.  A \emph{hard constraint} is of the
form $\asp{\codeif b_1,\ldots, b_n,\naf c_{1},\ldots,\naf c_{m}}$ and a
\emph{weak constraint} is of the form $\asp{:\sim b_1,\ldots,b_n, \naf
c_1,\ldots,}$ $\asp{\naf c_m\ruleend[w@l,t_1,\ldots, t_k]}$ where $\asp{w}$ and
$\asp{l}$ are terms specifying \emph{weight} and {\em priority level}, and
$\asp{t_1,\ldots,t_k}$ are terms.

The \emph{Herbrand Base} of an ASP program $P$, denoted $HB_P$, is the set of
ground (variable free) atoms that can be formed from predicates and constants
in $P$. Subsets of $HB_P$ are called (Herbrand) interpretations of $P$. The
semantics of ASP programs $P$ are defined in terms of \emph{answer sets} -- a
special\footnote{For a formal definition of answer sets of the programs in this
paper see~\cite{simpleReduct}.} subset of interpretations of $P$, denoted as
$AS(P)$, that satisfy every rule in $P$.  Given an answer set $A$, a ground
normal or choice rule is satisfied if the head is satisfied by $A$ whenever all
positive atoms and none of the negated atoms of the body are in $A$, that is
when the body is satisfied. A ground aggregate $\asp{l\{h_{1}, \ldots,
h_{k}\}u}$ is satisfied by an interpretation $I$ iff $\asp{l}\leq|
I\cap\{\asp{h_{1}, \ldots,h_{k}}\}|\leq \asp{u}$. So, informally, a ground
choice rule is satisfied by an answer set $A$ if whenever its body is satisfied
by an answer set $A$, a number between $\asp{l}$ and $\asp{u}$ (inclusive) of
the atoms in the aggregate are also in $A$. A ground constraint is satisfied
when its body is not satisfied. A constraint therefore has the effect of
eliminating all answer sets that satisfy its body. Weak constraints do not
affect what is, or is not, an answer set of a program $P$.  Instead, they
create an ordering $\succ_{P}$ over $AS(P)$ specifying which answer sets are
``preferred'' to others.  Informally, at each \emph{priority level} $\asp{l}$,
satisfying weak constraints with level $l$ means discarding any answer set that
does not minimise the sum of the weights of the ground weak constraints (with
level $\asp{l}$) whose bodies are satisfied. Higher levels are minimised first.
For example, the two weak constraints $\asp{:\sim mode(L, walk), distance(L,
D).[D@2, L]}$ and $\asp{:\sim cost(L, C).[C@1, L]}$ express a preference
ordering over alternative journeys. The first constraint (at priority 2)
expresses that the total walking distance (the sum of the distances of journey
legs whose mode of transport is $\asp{walk}$) should be minimised, and the
second constraint expresses that the total cost of the journey should be
minimised. As the first weak constraint has a higher priority level than the
second, it is minimised first -- so given a journey $j_1$ with a higher cost
than another journey $j_2$, $j_1$ is still preferred to $j_2$ so long as the
walking distance of $j_1$ is lower than that of $j_2$. The set $ord(P)$
captures the ordering of interpretations induced by $P$ and generalises the
$\succ_{P}$ relation, so it not only includes $\langle A_1, A_2, <\rangle$ if
$A_1 \succ_{P} A_2$, but includes tuples for each binary comparison operator
($<$, $>$, $=$, $\leq$, $\geq$ and $\neq$).

 A \emph{partial interpretation}, $e_{pi}$, is a pair of sets of ground atoms
$\langle e^{inc}, e_{pi}^{exc}\rangle$. An interpretation $I$ \emph{extends}
$e_{pi}$ iff $e_{pi}^{inc} \subseteq I$ and $e_{pi}^{exc} \cap I = \emptyset$.
Examples for learning come in two forms: \emph{context-dependent partial
interpretations} (CDPIs) and \emph{context-dependent ordering examples}
(CDOEs).
A CDPI example $e$ is a pair $\langle e_{pi}, e_{ctx}\rangle$, where
$e_{pi}$ is a partial interpretation and $e_{ctx}$ is a program with no weak
constraints called the context of $e$.  A program $P$ is said to \emph{bravely
accept} $e$ if there is at least one answer set $A$ of $P \cup e_{ctx}$ that
extends $e_{pi}$ -- such an $A$ is called an accepting answer set of $P$ wrt
$e$.  Essentially, a CDPI says that the learned program, together with the
context of $e$, should bravely\footnote{A program $P$ \emph{bravely entails}
an atom $\asp{a}$ if there is at least one answer set of $P$ that contains
$a$.} entail all inclusion atoms and none of the exclusion atoms of $e$. CDPIs
can be used for \emph{classification} tasks, as they specify that given
contexts should entail given conjunctions of atoms. But as learned programs may
have multiple answer sets, accepting a CDPI may require additional assumptions
to be made.
A CDOE $o$ is a tuple $\langle e_1, e_2, op\rangle$, where the first two
elements are CDPIs and $op$ is a binary comparison operator. A program $P$ is
said to \emph{bravely respect} $o$ if there is a pair of accepting answer sets,
$A_1$ and $A_2$, of $P$ wrt $e_1$ and $e_2$, respectively, such that $\langle
A_1, A_2, op\rangle \in ord(P)$.  $P$ is said to \emph{cautiously respect} $o$
if for every pair, $A_1$ and $A_2$, of accepting answer sets of $P$ (wrt $e_1$
and $e_2$, respectively), $\langle A_1, A_2, op\rangle \in ord(P)$.  CDOEs
enable \emph{preference learning} as they specify which answer sets should be
prefered to other answer sets.

An $ILP_{LOAS}^{context}$ task $T$ consists of an ASP background knowledge $B$,
a hypothesis space $S_M$, labelled CDPIs, $E^{+}$ (positive examples) and
$E^{-}$  (negative examples), and labelled CDOEs, $O^{b}$ (brave orderings) and
$O^{c}$ (cautious orderings). $S_M$ is the set of rules allowed in hypotheses.
A hypothesis $H\subseteq S_M$ \emph{covers} a positive (resp. negative) example
$e$ if $B\cup H$ accepts (resp. does not accept) $e$. $H$ covers a brave (resp.
cautious) ordering $o$ if $B\cup H$ bravely (resp. cautiously) respects $o$.
$H$ is an inductive solution of $T$ iff $H$ covers every example in $T$.

\section{Learning Framework}
\label{sec:framework}

This section presents the $ILP_{LOAS}^{noise}$ framework, which extends our previous (non-noisy) learning framework
$ILP_{LOAS}^{context}$ (\cite{ICLP16}), by allowing examples to be \emph{weighted
context-dependent partial interpretations} and \emph{weighted context-dependent
ordering examples}.  These are essentially the same as CDPIs and CDOEs, but 
weighted with a notion of \emph{penalty}. If a hypothesis does not
cover an example, we say that it \emph{pays the penalty} of that example.
Informally, penalties are used to calculate the \emph{cost} associated with a
hypothesis for not covering examples. The cost function of a hypothesis $H$ is
the sum over the penalties of all of the examples that are not \emph{covered}
by $H$, augmented with the length of the hypothesis. The goal of
$ILP_{LOAS}^{noise}$ is to find a hypothesis that minimises the cost function
over a given hypothesis space with respect to a given set of examples.

\begin{definition}
  A \emph{weighted context-dependent partial interpretation} $e$ is a tuple
  $\langle e_{id}, e_{pen}, e_{cdpi}\rangle$, where $e_{id}$ is a constant,
  called the \emph{identifier} of $e$ (unique to each example), $e_{pen}$ is
  the penalty of $e$ and $e_{cdpi}$ is a context-dependent partial
  interpretation. The penalty $e_{pen}$ is either a positive integer, or
  $\infty$. A program $P$ \emph{accepts} $e$ iff it accepts $e_{cdpi}$.
%
  A \emph{weighted context-dependent ordering example} $o$ is a tuple $\langle
  o_{id}, o_{pen}, o_{ord}\rangle$, where $o_{id}$ is a constant, called the
  \emph{identifier} of $o$, $o_{pen}$ is the penalty of $o$ and $o_{ord}$ is a
  CDOE. The penalty $o_{pen}$ is either a positive integer, or $\infty$. A
  program $P$ \emph{bravely} (resp.\ \emph{cautiously}) \emph{respects} $o$ iff
  it bravely (resp.\ cautiously) respects $o_{ord}$.
\end{definition}

In learning tasks without noise, each example must be covered by any inductive
solution. However, when examples are noisy (i.e.\ they have a weight),
inductive solutions need not cover every example, but they incur penalties for
each uncovered example.
Multiple occurrences of the same CDPI example have different identifiers. So
hypotheses that do not
cover that example will pay the penalty multiple times (for instance, if a
CDPI occurs twice then a hypothesis will have to pay twice the penalty for not
covering it).
In most of the learning tasks presented in this paper, all examples have the
same penalty. In some cases, however, penalties are used to simulate
\emph{oversampling}; for example, in tasks with far more positive examples than
negative examples, we may choose to give the negative examples a higher weight
-- otherwise it is likely that the learned hypothesis will treat all negative
examples as noisy.

Our learning task with noisy examples consists of an ASP background knowledge,
weighted CDPI and CDOE examples and a hypothesis space,\footnote{For details of
hypothesis spaces in this paper, see
\url{https://www.doc.ic.ac.uk/~ml1909/ILASP/}.} which defines the
set of rules allowed to be used in constructing solutions of the task. These
tasks are \emph{supervised} learning tasks, as all examples are
labelled, as positive/negative, or with an operator in the case of the ordering
examples.

\begin{definition}\label{def:lnas}
  An $\loasne$ task $T$ is a tuple of the form $\langle B, S_M, \langle E^+,
  E^-, O^b, O^c\rangle\rangle$, where $B$ is an ASP program, $S_M$ is a
  hypothesis space, $E^+$ and $E^-$ are sets of weighted CDPIs and $O^b$ and
  $O^c$ are sets of weighted CDOEs. Given a hypothesis $H \subseteq S_M$,

  \begin{enumerate}
    \item $uncov(H, T)$ is the set consisting of
      all examples $e \in E^+$ (resp. $E^-$) such that $B \cup H$ does not
      accept (resp. accepts) $e$ and
      all ordering examples $o \in O^b$ (resp. $O^c$) such that $B \cup H$ does
      not bravely (resp. cautiously) respect $o$.
    \item
      the penalty of $H$, denoted as $pen(H, T)$, is the sum $\sum_{e \in
      uncov(H, T)} e_{pen}$.
    \item
      the score of $H$, denoted as $\mathcal{S}(H, T)$, is the sum $|H| + pen(H, T)$.
    \item $H$ is an inductive solution of $T$ (written $H \in \loasne(T)$) if
      and only if $\mathcal{S}(H, T)$ is finite.
    \item $H$ is an \emph{optimal inductive solution} of $T$ (written $H \in$
      $^*\loasne(T)$) if and only if $\mathcal{S}(H, T)$ is finite and
      $\nexists H' \subseteq S_M$ such that $\mathcal{S}(H, T) >
      \mathcal{S}(H', T)$.
  \end{enumerate}

\end{definition}

Examples with infinite penalty \emph{must} be covered by any
inductive solution, as any hypothesis that does not cover such an
example will have an infinite score.  An $\loasne$ task $T$ is said to be
\emph{satisfiable} if $\loasne(T)$ is non-empty. If $\loasne(T)$ is empty, then
$T$ is said to be \emph{unsatisfiable}.
Theorem~\ref{thm:complexity} shows that for propositional tasks (where all
hypothesis spaces, contexts and background knowledge are propositional) the
complexity of $ILP_{LOAS}^{noise}$ is the same as $ILP_{LOAS}^{context}$ for
the decision problems of \emph{verification} -- deciding if a given
hypothesis is a solution of a given task -- and \emph{satisfiability} --
deciding if a given task has any solutions -- investigated
in~\cite{AIJ17}.

\begin{theorem}\label{thm:complexity} $ $

\begin{enumerate}
  \item Deciding verification for an arbitrary propositional $ILP_{LOAS}^{noise}$ task is $DP$-complete
  \item Deciding satisfiability for an arbitrary propositional $ILP_{LOAS}^{noise}$ task is $\Sigma^P_2$-complete
\end{enumerate}
\end{theorem}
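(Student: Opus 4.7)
The plan is to show both upper and lower bounds by connecting $ILP_{LOAS}^{noise}$ back to the non-noisy framework $ILP_{LOAS}^{context}$, whose matching complexity bounds are already established in \cite{AIJ17}.

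For hardness in both parts, I would exhibit a trivial reduction from $ILP_{LOAS}^{context}$. Given a non-noisy task $T = \langle B, S_M, \langle E^+, E^-, O^b, O^c\rangle\rangle$, build the noisy task $T'$ identical to $T$ except that every example is assigned penalty $\infty$ (and a fresh identifier). By Definition~\ref{def:lnas}, a hypothesis $H$ has finite score on $T'$ iff no $\infty$-penalty example lies in $uncov(H,T')$, which is exactly the condition that $H$ covers every example of $T$. Hence $H$ is a solution to $T$ iff $H$ is a solution to $T'$, so verification of $T$ reduces to verification of $T'$, and satisfiability of $T$ reduces to satisfiability of $T'$. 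The $DP$-hardness of verification and $\Sigma^P_2$-hardness of satisfiability then transfer from the results of~\cite{AIJ17}.

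For membership, the key observation is that the condition ``$\mathcal{S}(H,T)$ is finite'' depends only on the $\infty$-penalty examples: finite-penalty examples can only contribute finite summands to $pen(H,T)$, so $\mathcal{S}(H,T) < \infty$ iff $B \cup H$ covers every example whose penalty is $\infty$. Thus, given an instance of verification for $\loasne$, I would construct in polynomial time the non-noisy sub-task $T^\infty$ consisting only of the infinite-penalty examples of $T$; $H$ is a solution to $T$ iff $H$ is a solution to $T^\infty$ as a non-noisy task. Since non-noisy verification is in $DP$, so is verification for $\loasne$. For satisfiability, a standard guess-and-check argument suffices: nondeterministically guess a subset $H \subseteq S_M$, then decide whether $H$ is a solution using the verification procedure. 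Because verification lies in $DP \subseteq P^{NP}$, the overall procedure is an $NP$ machine with an $NP$ oracle, which places satisfiability in $\Sigma^P_2$.

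The only point requiring care is confirming that the ``finite score'' reformulation truly behaves as a non-noisy coverage condition under each of the four example types (positive/negative CDPIs and brave/cautious CDOEs), but this is immediate from Definition~\ref{def:lnas}(1), which defines $uncov(H,T)$ uniformly across all four classes. I do not anticipate a technical obstacle beyond this bookkeeping, since both directions rely entirely on the known bounds for $ILP_{LOAS}^{context}$ plus the observation that $\infty$-penalty and non-noisy ``must-cover'' semantics coincide.
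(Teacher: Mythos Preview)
Your proposal is correct and follows essentially the same route as the paper: hardness via the trivial embedding of $ILP_{LOAS}^{context}$ into $ILP_{LOAS}^{noise}$ by assigning all examples penalty $\infty$, and membership via the observation that finiteness of $\mathcal{S}(H,T)$ depends only on the $\infty$-penalty examples, yielding a polynomial reduction back to $ILP_{LOAS}^{context}$. The only minor deviation is that for satisfiability membership you spell out a guess-and-check argument (guess $H$, verify in $DP\subseteq P^{NP}$), whereas the paper simply invokes the same $T^\infty$ reduction you already built for verification to conclude $\loasne(T)=\emptyset \Leftrightarrow ILP_{LOAS}^{context}(T^\infty)=\emptyset$; both arguments are standard and equivalent here.
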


Like its predecessor $ILP_{LOAS}^{context}$, our new learning framework
$ILP_{LOAS}^{noise}$ for noisy examples
is capable of learning complex human-interpretable
knowledge, containing defaults, non-determinism, exceptions and preferences.
The generalisation to allow penalties on the examples means that the new
framework can be deployed in realistic settings where examples are not
guaranteed to be correctly labelled. \thmref{complexity} shows that this
generalisation does not come at any additional cost in terms of the
computational complexity of important decision problems of the framework.

\section{The ILASP system}\label{sec:ilasp}

ILASP (Inductive Learning of Answer Set Programs,
\cite{JELIAILASP,ILASP_system,ICLP15,ICLP16}) is a collection of algorithms for
solving LAS tasks. The general idea behind the ILASP approach is to transform a
learning task into a meta-level ASP program, which can be iteratively solved
(extending the program in each iteration) until the optimal answer sets of the
program correspond to solutions of the learning task. Unlike many
other ILP systems, such as
\cite{muggleton1995inverse,ray2009nonmonotonic,inspire}, the ILASP algorithms
are guaranteed to return an optimal solution of the input learning task (with
respect to the cost function). This can of course mean that ILASP may take longer
to compute a solution than \emph{approximate} systems (which are not guaranteed
to return an optimal solution); however, as we demonstrate in \secref{rds}, the
hypotheses found by ILASP are often more accurate than those found by
approximate systems.

Each version of ILASP has aimed to address scalability issues of the previous
versions.\break ILASP1~\citep{JELIAILASP} was a prototype implementation, with a major
efficiency issue with respect to negative examples. ILASP2~\citep{ICLP15}
addressed this issue by introducing a notion of an \emph{violating reason}.
In each iteration,  each answer set of the ILASP2 meta-level program $T_{meta}$ contains a
representation of a hypothesis which covers every positive example and every
brave ordering example. An answer set representing a hypothesis that
is not an inductive solution, contains a ``reason'' why at least one
negative example or cautious ordering is not covered, which can be translated
into an ASP representation that, when added to $T_{meta}$, rules out any
hypothesis that is not a solution for this reason. This process is performed
iteratively until no more violating reasons are detected. For full details of
violating reasons, see~\cite{ICLP15}.

Both ILASP1 and ILASP2 scale poorly with respect to the number of examples, as
the number of rules in the ground instantiation of their meta-level
representation is proportional to the number of examples in the learning task.
As many examples may be similar, and thus covered by the same hypotheses, in
non-noisy tasks (where all examples must be covered), it is often sufficient to
consider a small subset of the examples called a \emph{relevant subset} of the
examples. ILASP2i~\citep{ICLP16} uses this property to further improve the
scalability of ILASP2.  It starts with an empty set of \emph{relevant examples}
$RE$, and, at each iteration, it calls ILASP2 on a learning task using only the
examples in $RE$. The hypothesis returned by ILASP2 is guaranteed to cover the
current relevant examples, but is not necessarily an inductive solution of the
original task. So, if ILASP2 returns a hypothesis that does not cover at least
one example, then an arbitrary uncovered example is added to $RE$ and the next
iteration is started. If no such example exists, then the hypothesis is
returned as an optimal inductive solution of the original task.
\cite{ICLP16} showed that ILASP2i can be up to two orders of magnitude faster
than ILASP2 on tasks with 500 (noise-free) examples.

Both ILASP2 and ILASP2i can be extended to solve $ILP_{LOAS}^{noise}$ tasks;
however, neither algorithm is well suited to solving tasks with a large number
of noise examples with finite penalties. ILASP2 does not scale with respect to the
number of examples (regardless of whether examples have finite penalties), and
the relevant example feature of ILASP2i is not equally effective when examples
have penalties. One reason for this is that many noisy examples may have to be
added to the relevant example set before the cost of not covering a particular
class of relevant examples is enough to outweigh the cost of learning an extra
rule in the hypothesis. The most recent ILASP algorithm, ILASP3, iteratively
translates examples into \emph{hypothesis constraints} -- constraints on the
structure of a hypothesis that are satisfied if and only if the hypothesis
covers the example. This leads to a much more compact meta-level program,
defined in terms of these hypothesis constraints. Once hypothesis constraints
have been computed for one example $e$, it is possible to compute the set of
other examples (which have not yet been translated into hypothesis constraints)
that are definitely not covered if $e$ is not covered. This means that one
relevant example can effectively have a much higher penalty than just the
penalty for that example, meaning that the number of relevant examples that are
needed in ILASP3 is often lower than those needed by ILASP2i.

\section{Evaluation of ILASP3 on synthetic datasets}
\label{sec:experiments}

In this section ILASP3 is evaluated on two synthetic datasets, the first of
which is aimed at learning normal rules, choice rules and hard constraints,
while the second is aimed at learning weak constraints. The value of using
synthetic datasets is that we can control the amount of noise and investigate
how the accuracy and running time of ILASP3 varies with the amount of noise.

\subsection{Hamilton Graphs}
\label{sec:hamilton}
\begin{figure*}[t]

    \hfill
        \includegraphics[width=0.40\textwidth, trim={15mm 0mm 17mm 0mm},clip]{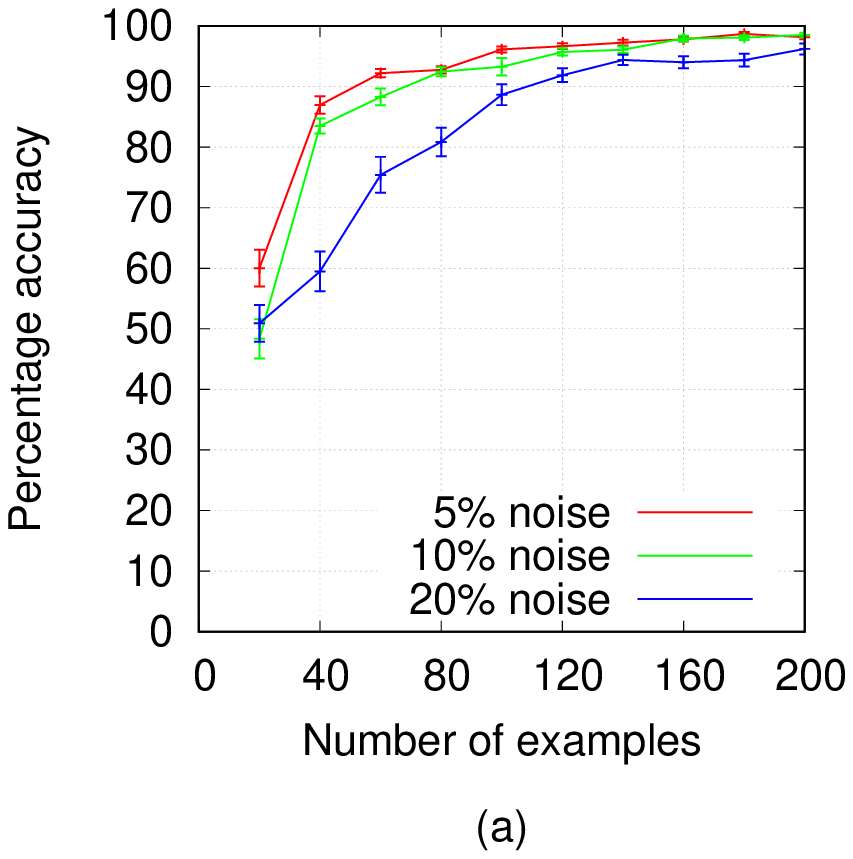}
        \hfill
        \includegraphics[width=0.40\textwidth, trim={15mm 0mm 17mm 0mm},clip]{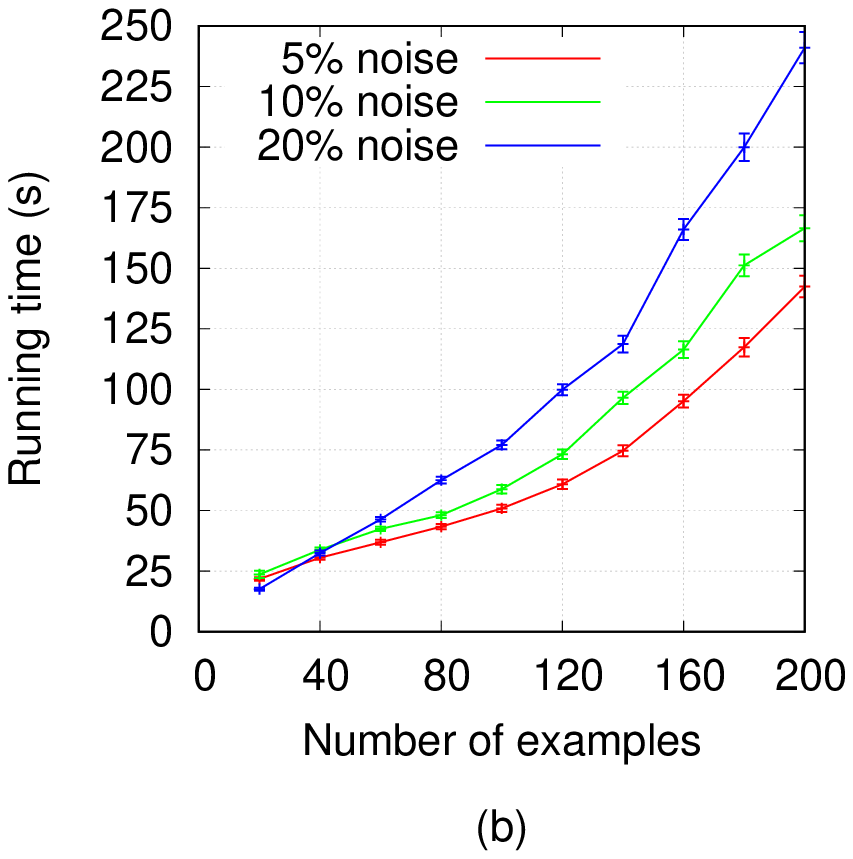}
        \hfill
        \!

    \caption{\label{fig:hamilton} (a) the average computation time and (b)
    average accuracy of ILASP3 for the Hamilton learning
    task, with varying numbers of examples, and varying noise.}

\end{figure*}

In this experiment the task is to learn the definition of what it means for a
graph to be Hamiltonian. This concept was chosen as it requires learning a
hypothesis that contains choice rules, recursive rules and hard constraints,
and also contains negation as failure. In these experiments, we show
that ILASP3 could learn this hypothesis in the presence of noise, and we test
how the running time of ILASP3 is affected by the number of examples and the
number of incorrectly labeled examples.

For $n = 20, 40, \ldots, 200$, $n$ random graphs of size one to four were
generated, half of which were Hamiltonian. The graphs were labelled as either
positive or negative, where positive indicates that the graph is Hamiltonian.
The correct ASP representation of Hamiltonian and a discussion of the
representation of examples in this task is given in
Appendix~\ref{sec:representations}.

We ran three sets of experiments to evaluate ILASP3 on the Hamilton learning
problem, with 5\%, 10\% and 20\% of the examples being labelled incorrectly. In
each experiment, an equal number of Hamiltonian graphs and non-Hamiltonian
graphs were randomly generated and 5\%, 10\% or 20\% of the examples were
chosen at random to be labelled incorrectly. This set of examples were labelled
as positive (resp. negative) if the graph was not (resp. was) Hamiltonian. The
remaining examples were labelled correctly (positive if the graph was
Hamiltonian; negative if the graph was not Hamiltonian). \figref{hamilton}
shows the average accuracy and running time of ILASP3 with up to 200 example
graphs. Each experiment was repeated 50 times (with different randomly
generated examples). In each case, the accuracy was tested by generating a
further 1,000 graphs and using the learned hypothesis to classify the graphs as
either Hamiltonian or non-Hamiltonian (based on whether the hypothesis was
satisfiable when combined with the representation of the graph).

The experiments show that on average ILASP3 is able to achieve a high accuracy
(of well over 90\%), even with 20\% of the examples labelled incorrectly. A
larger percentage of noise means that ILASP3 requires a larger number of
examples to achieve a high accuracy. This is to be expected, as with few
examples, the hypothesis is more likely to ``overfit'' to the noise, or pay the
penalty of some non-noisy examples. With large numbers of examples, it is more
likely that ignoring some non-noisy examples would mean not covering others,
and thus paying a larger penalty. The computation time rises in all three
graphs as the number of examples increases. This is because larger numbers of
examples are likely to require larger numbers of iterations of the ILASP3
algorithm. Similarly, more noise is also likely to mean a larger number of
iterations.

\subsection{Noisy Journey Preferences}
\label{sec:jp}

The experiment in this section is a noisy extension of the journey preference
learning setting used in \cite{ICLP16}, where the goal is to learn a user's
preferences from a set of ordered pairs of journeys. These experiments aim to
show that ILASP3 is capable of preference learning in the presence of noise,
and to test how the accuracy and running time of ILASP3 are affected by the
numbers of examples and the proportion of examples which are incorrectly
labelled.

In each experiment, we selected a ``target hypothesis'' consisting of between
one and three weak constraints from a hypothesis space of weak constraints
(discussed in Appendix~\ref{sec:representations}). For each set of weak
constraints, we then ran learning tasks with 0, 20, $\ldots$, 200 examples and
with $5\%$, $10\%$ and $20\%$ noise. The ordering examples for these learning
tasks were generated from the weak constraints such that half of the (brave)
ordering examples represented pairs of journeys $J_1$ and $J_2$ where $J_1$ was
strictly preferred to $J_2$, given the weak constraints, and the other half
represented journeys such that $J_1$ was equally preferred to $J_2$. Depending
on the level of noise, either $5\%$, $10\%$ or $20\%$ of the examples were
given with the wrong operator ($>$ instead of $<$ and $\neq$ instead of $=$).
Each ordering example was given a penalty of one.

\begin{figure*}[t]

    \hfill
    \includegraphics[width=0.40\textwidth, trim={18mm 0mm 15mm 0mm},clip]{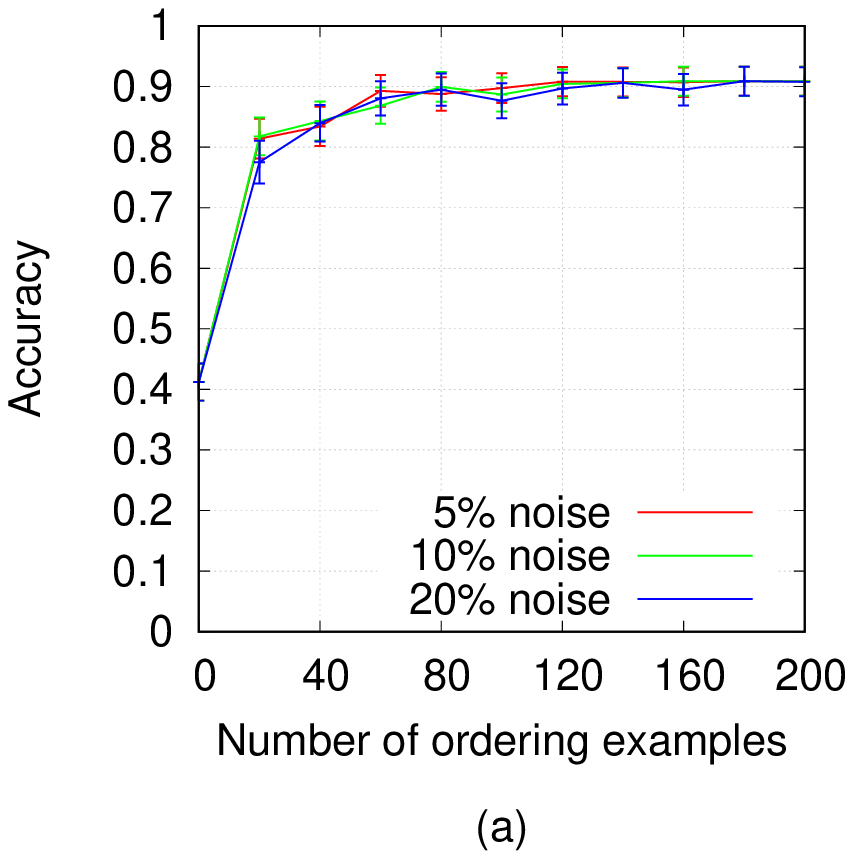}
    \hfill
    \includegraphics[width=0.40\textwidth, trim={18mm 0mm 15mm 0mm},clip]{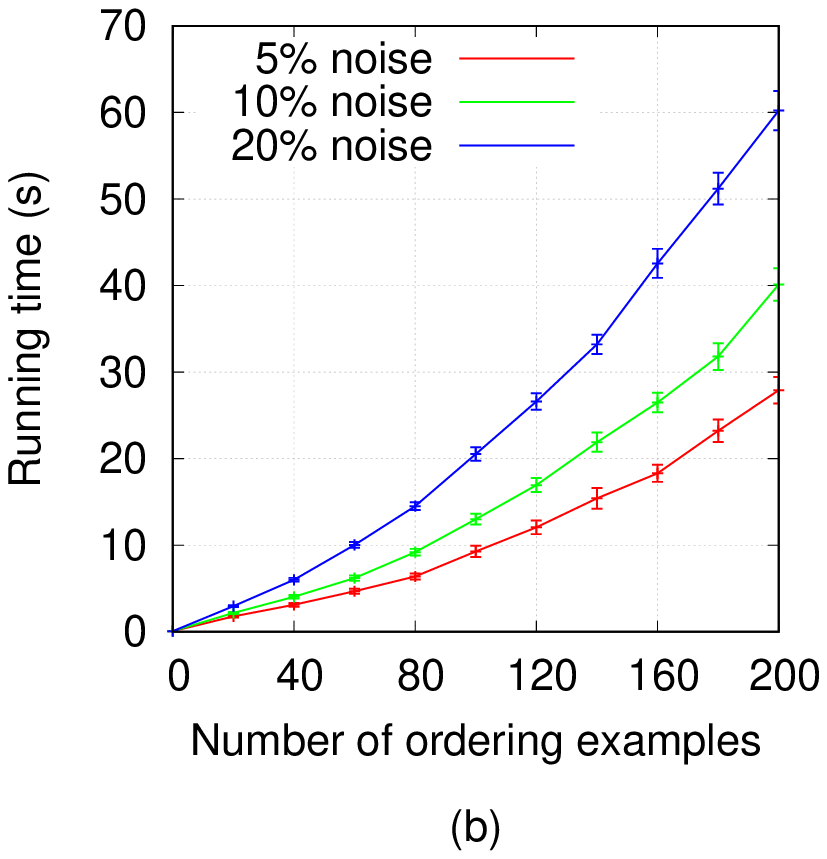}
    \hfill
    \!

    \hfill
    \includegraphics[width=0.40\textwidth, trim={18mm 0mm 15mm 0mm},clip]{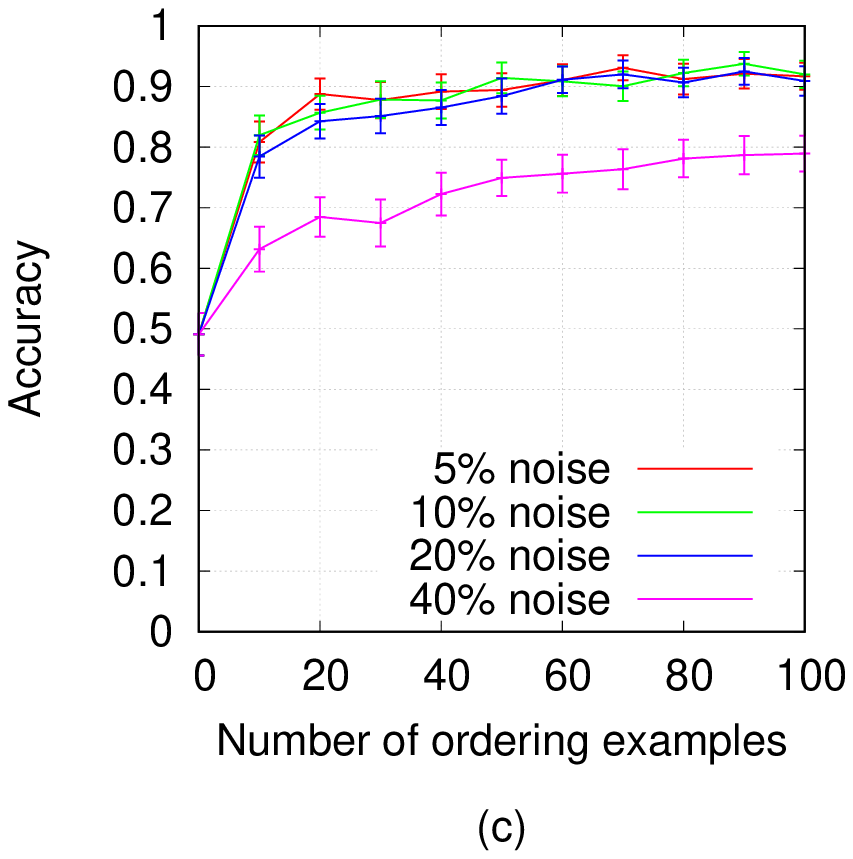}
    \hfill
    \!

    \caption{\label{fig:jpn} (a) and (c) the average accuracy and (b) average
    computation time of ILASP3 for the journey preference learning task, with
    varying numbers of examples, and varying noise. Each point in the graphs is
    an average over 50 different tasks.}

\end{figure*}

The results (\figref{jpn} (a)) show that even with $20\%$ noise, ILASP3 was
able to learn hypotheses with an average accuracy of over 90\%.  There was not
much difference between ILASP3's accuracy with 5\%, 10\% and 20\% noise,
although the noisier tasks had a higher computation time (this is shown in
\figref{jpn} (b)), as in general ILASP3 requires more iterations on noisier
tasks. Even with 20\% noise and 200 ordering examples, ILASP3 terminated in
just over 60 seconds on average.

As the results for 20\% noise were so close to the results for 5\% noise, we
ran a further set of examples to check that there was some limit to the level
of noise where ILASP3 would no longer learn such an accurate
hypothesis.\footnote{If ILASP could achieve such a high accuracy, even with very
high levels of noise, then this would indicate that the hypothesis space was
too restrictive, and it was impossible to learn anything other than an accurate
hypothesis.} In this second set of experiments, we tested ILASP3 with up to
40\% noise, and investigated with 0, 10, $\ldots$, 100 examples. With 40\%
noise, the accuracy was lower, but ILASP still achieved an average accuracy of
just under 80\%.

These experiments show that ILASP3 is able to accurately learn a set of weak
constraints from examples of the orderings of answer sets given by these weak
constraints, even when 20\% of the orderings are incorrect.  Although the
running time of ILASP3 is affected by the number of examples and the proportion
of incorrectly labelled examples, ILASP3 is able to find an optimal solution in
an average of 60 seconds, even with 200 ordering examples, 20\% of which are
incorrectly labelled.
Learning weak constraints is significant, as they can be
used to represent user preferences. In
Sections~\ref{sec:car}~and~\ref{sec:sushi}, we apply ILASP3 on two real
preference learning datasets.

\section{Comparison with Other Systems}\label{sec:rds}

The experiments in this section use datasets that have previously been used to
evaluate other ILP systems in the presence of noise. Unlike ILASP3, none of the
systems we compare with aim to find optimal solutions. The aim of this set of
experiments is therefore to test whether finding optimal solutions leads to any
gain in accuracy over systems which may return sub-optimal solutions.


\subsection{CAVIAR Dataset}
\label{sec:caviar}

In this experiment ILASP3 was tested on the recent CAVIAR dataset that has been
used to evaluate the OLED~\citep{OLED} system, which is an extension of the
XHAIL~\citep{ray2009nonmonotonic} algorithm, for learning Event
Calculus~\citep{kowalski1986logic} theories. The dataset contains data gathered
from a video stream. Information such as the positions of people has been
extracted from the stream, and humans have annotated the data to specify when
any two people are interacting. Specifically, we consider a task
from~\cite{OLED}, in which the aim is to learn rules to define initiating and
terminating conditions for two people meeting.  In the evaluation of the OLED
system, examples were generated for every pair of consecutive timepoints $t$
and $t+1$. Each example is a pair $\langle N\cup A_t, A_{t+1}\rangle$, where
$N$ is the ``narrative'' at time $t$ (a collection of information about the
people in the video stream, such as their location and direction), and $A_i$ is
the ``annotation'' at time $i$ (exactly which pairs of people in the video have
been labelled as meeting). This is very simple to express using
context-dependent examples. The context of an example is simply the narrative
and annotation of time $t$ together with a set of constraints that enforce that
the meetings at time $t$ are exactly those in the annotation. The aim of this
experiment is to compare ILASP3 to OLED, which was specifically designed to
solve this kind of task efficiently. We aimed to discover whether ILASP3 is
able to find better quality hypotheses than OLED (in terms of the $F_1$ measure
used to evaluate the hypotheses found by OLED), and whether ILASP3's guarantee
of finding an optimal solution comes at a cost in terms of running time.

In total there are 24,530 consecutive pairs in the dataset.\footnote{We used the
data from \url{users.iit.demokritos.gr/~nkatz/OLED-data/caviar.json.tar.gz}}
We performed ten-fold cross validation by randomly partitioning the dataset. As
there were only twenty-two timepoints where the group of people meeting was
different to the timepoint before, these examples were given a high penalty (of
100).  Effectively this is the same as oversampling this class of examples. If
all examples had been given a penalty of one, then ILASP3 would have likely
learned the empty hypothesis, as the twenty-two examples in a task of
many thousands of examples would likely be treated as noise.

We compare ILASP3 to OLED on the measures of precision, recall and the $F_1$
score.\footnote{Let $tp$, $tn$, $fp$, $fn$ represent the number of true
positives, true negatives, false positives and false negatives achieved by a
classifier on some test data.  The \emph{precision} of the classifier (on this
test data) is equal to $\frac{tp}{tp + fp}$ and the \emph{recall} is equal to
$\frac{tp}{tp + fn}$.  The $F_1$ score is equal to $\frac{2 \times precision
\times recall}{precision + recall}$.}  ILASP3 achieved a precision of 0.832 and a
recall of 0.853, giving an $F_1$ score of 0.842, compared with OLED's precision
of 0.678 and recall of 0.953, with an average $F_1$ score of 0.792. ILASP3's
average running time was significantly higher at 576.3s compared with OLED's
107s. This is explained by the fact that the OLED system computes hypotheses
through theory revision, iteratively processing examples in sequence to
continuously revise its hypothesis. This means that, unlike ILASP3, OLED is not
guaranteed to find an optimal solution of a learning task.

We note several key differences between our experiments and those reported
in~\cite{OLED}. Firstly, to reduce the number of irrelevant answer
sets (which lead to slow computation), we constrained the hypothesis
space stating that rules for $\asp{terminatedAt(meeting(V1, V2), T)}$ had
to contain $\asp{holdsAt(meeting(V1, V2), T)}$ in the body, which ensures that
a fluent can only be terminated if it is currently happening. Similarly, any
rule for $\asp{initiatedAt(meeting(V1, V2), T)}$ had to contain $\asp{\naf
holdsAt(meeting(V1, V2), T)}$ in the body. OLED does not employ this
constraint, but when processing an example pair of time points, only considers
learning a new rule for $\asp{initiatedAt}$, for example, if two people are
meeting at time $t+1$, but not at time $t$.
The second difference in our experiment is that ILASP3 enumerates the
hypothesis space in full. As the hypothesis space in this task is potentially
very large, several ``common sense'' constraints were enforced on the
hypothesis space; for instance, two people cannot be both close to and far away
from each other at the same time (rules with both conditions in the body were
not generated).  In total, the hypothesis space contained 3,370 rules.  OLED
does not enumerate the hypothesis space in full, but uses an approach similar
to XHAIL, and derives a ``bottom clause'' from the background knowledge and the
example. In most cases (unless there is noise in the narrative, suggesting that
two people are both close to and far away from each other) OLED will therefore
only consider rules that respect the ``common sense'' constraints, as other
rules would not be derivable.

This experiment has shown that, at least on this dataset, ILASP3's guarantee of
finding an optimal solution can lead to better quality hypotheses than those
found by OLED; however, this quality comes at a cost, as ILASP3's running time
is significantly higher than OLED's.

\subsection{Sentence Chunking}
\label{sec:sentence}

In~\cite{inspire}, the Inspire system was evaluated on a sentence
chunking~\citep{chunking} dataset~\citep{chunking_dataset}. The task in this
setting is to learn to split a sentence into short phrases called chunks. For
instance, according to the dataset~\citep{chunking_dataset}, the sentence ``Thai
opposition party to boycott general election.'' should be split into the three
chunks ``Thai opposition party'', ``to boycott'' and ``general election''.
\cite{inspire} describe how to transform each sentence into a set of facts
consisting of part of speach (POS) tags. We use each of these sets of facts as
the context of a context dependent example. In Inspire (which is a brave
induction system), the facts are all put into the background knowledge. The
task is to learn a predicate $\asp{split/1}$, which expresses where sentences
should be split. Inspire does not guarantee finding an optimal solution. The
hypothesis can be suboptimal for three reasons: firstly, the abductive phase
may find an abductive solution which leads to a suboptimal inductive solution;
secondly, Inspire's pruning may remove some hypotheses from the hypothesis
space; and finally, Inspire was set to interrupt the inductive phase after
1,800 seconds, returning the most optimal hypothesis found so far. In these
experiments, we aimed to show that ILASP3's guarantee of finding an optimal
solution leads to a better quality hypotheses than Inspire's approximations,
and if so, whether ILASP3's running time was higher Inspire's timeout of 1,800s.

Note that the Inspire tasks in~\cite{inspire} group the multiple $\asp{split}$
examples for a chunk into a single example (using a $\asp{goodchunk}$
predicate); for example, the background knowledge may contain a rule
$\asp{goodchunk(1) \codeif split(1), \naf split(2), \naf split(3), split(4)}$
expressing that there is a chunk between words one and four of a sentence. It
is noted in~\cite{inspire} that this increased performance. This is because
there is no benefit in covering some of the $\asp{split}$ atoms that make up a
chunk, as hypotheses are tested over full chunks rather than splits.  In our
framework, we represent this directly with no need for the $\asp{goodchunk}$
rules, with the individual split atoms being inclusions and exclusions in the
partial interpretation of the example and the penalty being on the full
example. In our learning task, the example corresponding to the rule for
$\asp{goodchunk(1)}$ would have the partial interpretation
$\langle\answerset{split(1);split(4)},\answerset{split(2);split(3)}\rangle$.
In~\cite{inspire}, eleven-fold cross validation was performed on five different
datasets, with 100 and 500 examples. As Inspire has a parameter which
determines how aggressive the pruning should be, \cite{inspire} present several
$F_1$ scores, for different values of this parameter. Each entry for Inspire in
Table~\ref{tbl:inspire} is Inspire's best $F_1$ score over all pruning
parameters.

\begin{table}[t]
  \vskip -0.15in

  \caption{\label{tbl:inspire} $F_1$ scores for Inspire and
  ILASP3 and ILASP3's average running time on the sentence chunking tasks.
  }

  \vskip 0.1in

  \small

  \begin{center}
  \begin{tabular}{|c|c|c|c|c|}
    \hline
    & & Inspire $F_1$ score & ILASP $F_1$ score & ILASP time (s)\\\hline
    \multirow{5}{*}{100 examples}
    &   Headlines S1   & 73.1 & 74.2 & 351.2 \\
    &   Headlines S2   & 70.7 & 73.0 & 388.3 \\
    &   Images S1      & 81.8 & 83.0 & 144.9 \\
    &   Images S2      & 73.9 & 75.2 & 187.2 \\
    &   Students S1/S2 & 67.0 & 72.5 & 264.5 \\\hline
    \multirow{5}{*}{500 examples}
    &   Headlines S1   & 69.7 & 75.3 & 1,616.6 \\
    &   Headlines S2   & 73.4 & 77.2 & 1,563.6 \\
    &   Images S1      & 75.3 & 80.8 & 929.8 \\
    &   Images S2      & 71.3 & 78.9 & 935.8 \\
    &   Students S1/S2 & 66.3 & 75.6 & 1,451.3 \\\hline
  \end{tabular}
  \end{center}

  \vskip -0.1in

\end{table}

Inspire approximates the optimal inductive solution of the task and has
a timeout of 1,800s on the inductive phase -- in contrast, ILASP3 terminated in
less than 1,800 seconds on every task. ILASP3 achieved a higher average $F_1$
score than Inspire on every one of the ten tasks. This shows that computing the
optimal inductive solution of a task can lead to a better quality hypothesis
than approximating the optimal solution.
Note that for four out of the five datasets, Inspire performs better with 100
examples than with 500 examples. A possible explanation for this is that with
more examples, Inspire does not get as close to the optimal solution as it does
with fewer examples, thus leading to a lower $F_1$ score on the test data. With
500 examples, ILASP3 does take longer to terminate than it does for 100
examples, but in four out of the five cases, ILASP's average $F_1$ score is
higher, confirming the expected result that more data should tend to lead to a better
hypothesis.

\subsection{Car Preference Learning}
\label{sec:car}

We tested ILASP3's ability to learn real user preferences with the \emph{car
preference dataset} from~\cite{abbasnejad2013learning}. This dataset consists
of responses from 60 different users, who were each asked to give their
preferences about ten cars. They were asked to order each (distinct) pair of
cars, leading to 45 orderings. The cars had four attributes, shown in
\tblref{car}~(a). Through this experiment, we aim to show that ILASP3 is
capable of learning real user preferences, encoded as weak constraints.  There
is not much work on applying ILP systems to preference learning, but one such
work~\citep{qomariyah2017learning} applied the Aleph~\citep{srinivasan2001aleph}
system to the car preference dataset. Aleph is not guaranteed to find an
optimal solution,\footnote{Aleph processes the examples sequentially, and
searches for the best clause to add in each iteration (in terms of coverage).
Although each iteration adds the best clause, this may still lead to a
sub-optimal hypothesis overall.} and is only capable of learning rules (and not
of learning weak constraints).  \cite{qomariyah2017learning} used Aleph to
learn rules defining the predicate $\asp{bt/2}$, where $\asp{bt(c_1, c_2)}$
represents that $\asp{c_1}$ is preferred to $\asp{c_2}$. For comparison, we
present the results of~\cite{qomariyah2017learning} on this dataset.

\begin{table}[t]
  \vskip -0.15in

\caption{(a) The attributes of the car preference dataset, along with the
  possible range of values for each attribute. The integer next to each value
is how that value is represented in the data. (b) The accuracy results of
ILASP3 compared with the three methods in~\cite{qomariyah2017learning} on the
  car preference dataset. \label{tbl:car}}

  \small

  \centering

  \begin{multicols}{2}

  \begin{tabular}{|c|c|}

  \hline

  Attribute & Values\\\hline

    Body type & $\asp{sedan} (1)$, $\asp{suv} (2)$\\
    Transmission & $\asp{manual} (1)$, $\asp{automatic} (2)$\\
    Engine Capacity & $\asp{2.5L}$, $\asp{3.5L}$, $\asp{4.5L}$, $\asp{5.5L}$, $\asp{6.2L}$\\
    Fuel Consumed & $\asp{hybrid} (1)$, $\asp{non\_hybrid} (2)$\\\hline

  \end{tabular}

  \begin{tabular}{|c|c|}
    \hline Method & Accuracy\\\hline
     SVM & 0.832\\
     DT & 0.747\\
     Aleph & 0.729\\
     ILASP3 A & 0.880\\
     ILASP3 B & 0.863\\\hline
  \end{tabular}
  \end{multicols}

  \begin{multicols}{2}

    (a)

    (b)

  \end{multicols}

  \vskip -0.06in

\end{table}

Our initial experiment was based on an experiment
in~\cite{qomariyah2017learning}, where the Aleph~\citep{srinivasan2001aleph}
system was used to learn the preferences of each user in the dataset and
compared with support vector machines (SVM) and decision trees (DT). Ten-fold
cross validation was performed for each of the 60 users on the 45 orderings. In
each fold, 10\% of the orderings were omitted from the training data and used
to test the learned hypothesis. The flaw in this approach is that
the omitted examples will often be implied by the rest of the examples (i.e. if $a
\prec b$ and $b \prec c$ are given as examples it does not make sense to
omit $a \prec c$). For this reason, we also experimented with leaving out all
the examples for a single car in each fold (i.e. every pair that contains that
car), and using these examples to test (again leading to ten folds).
This new task corresponds to learning preferences from a complete ordering of
nine cars, and testing the preferences on an unseen car.

Table~\ref{tbl:car}~(b) shows the accuracy of the approach
in~\cite{qomariyah2017learning} and ILASP3 accuracy on the two versions of the
experiment. The easier task (with 10\% of the orderings omitted) is denoted as
experiment A in the table, and the harder task is denoted as experiment B. In
fact, even on the harder version of the task, ILASP3 performs better than the
approaches in \cite{qomariyah2017learning} perform on the easier version of the
task.
In one fold for the first user (in experiment A),
ILASP3 learns the following weak constraints:
$\asp{:\sim fuel(2).[1@4]}$;\;
$\asp{:\sim body(1), transmission(2).[-1@3]}$;\;
$\asp{:\sim engine\_cap(V0).[V0@2, V0]}$;\;
$\asp{:\sim body(1).[-1@1]}$.
This hypothesis corresponds to the following set of prioritised preferences
(ordered from most to least important): the user (1) prefers hybrid cars to
non-hybrid cars; (2) likes automatic sedans; (3) would like to minimise the
engine capacity of the car; and (4) prefers sedans to SUVs.

The noise in this experiment comes from the fact that some of the answers given
by participants in the survey may contradict other answers. Some participants
gave inconsistent orderings (breaking transitivity) meaning that there is no
set of weak constraints that covers every ordering example.

The results of these experiments have shown that ILASP3 is able to learn
hypotheses that accurately represent real user preferences, even in the
presence of noise. On average, ILASP3 learns a hypothesis with a higher
accuracy than the hypothesis learned by~\cite{qomariyah2017learning}. This
could be for two reasons: (1) the fact that Aleph might return a sub-optimal
inductive solution; or (2), the representation of hypotheses as weak
constraints allows for preferences to be expressed that cannot be expressed
using the definite search space in~\cite{qomariyah2017learning}.

\subsection{SUSHI Preference Learning}
\label{sec:sushi}

\begin{table}[t]
\vskip -0.15in

  \caption{\label{tbl:sushi} (a) the attributes of the SUSHI preference
  dataset, along with the range of values for each attribute, and (b) the
  average accuracy of ILASP3 compared with the methods used
  in~\cite{qomariyah2017learning}.}

  \centering

  \small

  \begin{multicols}{2}

  \begin{tabular}{|c|c|}

  \hline

  Attribute & Values\\\hline

  Style & $\asp{maki} (\asp{0})$, $\asp{non\_maki} (\asp{1})$\\
  Major group & $\asp{seafood} (\asp{0})$, $\asp{non\_seafood} (\asp{1})$\\
  Minor group & $\asp{0,\ldots, 11}$\\
  Oiliness & $[0, 4]$\\
  Frequency Eaten & $[0, 3]$\\
  Normalised Price & $[0, 5]$\\
  Frequency Sold & $[0, 1]$\\\hline

  \end{tabular}

  \begin{tabular}{|c|c|}
    \hline Method & Accuracy\\\hline
    SVM & 0.76\\
    DT &  0.81 \\
    Aleph & 0.78\\
    ILASP3 & 0.84\\\hline
  \end{tabular}

  \end{multicols}

  \begin{multicols}{2}

    (a)

    (b)

  \end{multicols}

  \vskip -0.06in

\end{table}

Another dataset for preference learning is the SUSHI
dataset~\citep{kamishima2010survey}. The dataset is comprised of peoples'
preference orderings over different types of sushi.  The purpose of these
experiments is to show that ILASP3 is capable of learning weak constraints that
accurately capture real user preferences. \cite{qomariyah2017learning} also
tested their approach on these datasets, and we compare ILASP3's accuracy with
their results in order to test whether the optimal solution found by ILASP3 is
more accurate than their solutions.

Each type of sushi has several attributes, described in
\tblref{sushi}~(a).  There is a mix of categorical and continuous attributes.
In the language bias for these experiments, the categorical attributes are used
as constants, whereas the continuous attributes are variables that can be used
as the weight of the weak constraint. This allows weak constraints to express
that the continuous attributes should be minimised or maximised.
The dataset was constructed from a survey in which people were asked to order
ten different types of sushi. This ordering leads to 45 ordering examples per
person. This experiment is based on a similar experiment
in~\cite{qomariyah2017learning}. For each of the first 60 people in the
dataset ten-fold cross validation was performed, omitting 10\% of the orderings
in each fold. This experiment suffers from the same flaw as Experiment A on the
car dataset in that some of the omitted examples may be implied by the training
examples, but we give the results for a comparison
to~\cite{qomariyah2017learning}. As shown in \tblref{sushi}~(b), ILASP3
achieved an average accuracy of 0.84, comparing favourably to each result
from~\cite{qomariyah2017learning}.

Although in this experiment each participant gave a consistent total ordering
of the ten types of sushi, it might be the case that there is no hypothesis in
the hypothesis space that covers all of the examples. This could be the case
when we are not modelling a feature of the sushi that the participant considers
to be important. For this reason, we treated this as a noisy learning setting,
and used ILASP3 to maximise the coverage of the examples.

This experiment has shown that ILASP3 is capable of learning weak constraints
that accurately capture users' preferences, and that ILASP3's approach of
finding an optimal hypothesis comprising of weak constraints is (on average)
more accurate than the approach of~\cite{qomariyah2017learning}, which finds a
(potentially sub-optimal) set of definite clauses.

\subsection{Comparison to $\delta$ILP}
\label{sec:dilp}

Although the work in this paper concerns learning ASP programs from noisy
examples, work has been done in the area of extending definite clause learning
to handle noisy examples (for example,
\cite{sandewall1993handling,srinivasan2001aleph,hyper_n}).
In~\cite{evans2017learning}, it was claimed that ILP approaches are unable ``to
handle noisy, erroneous, or ambiguous data'' and that  ``If the positive or
negative examples contain any mislabelled data, [ILP approaches] will not be
able to learn the intended rule''.  The experiments in this section aim to
refute this claim.

To learn from noisy data, \cite{evans2017learning} introduced the $\delta$ILP
algorithm, based on artificial neural networks. They demonstrated that
$\delta$ILP is able to achieve a high accuracy even with a large proportion of
noise in the examples. \cite{evans2017learning} evaluated $\delta$ILP on six
synthetic datasets, with noise ranging from 0\% to 90\%. In these experiments,
we investigated the accuracy of ILASP3 on five of these six
datasets.\footnote{The authors of~\citep{evans2017learning} provided us with
the training and test data for these five problems.} In the original
experiments, examples were atoms, and noise corresponded to swapping positive
and negative examples. In each of the $ILP_{LOAS}^{noise}$ tasks, we ensured
that the hypothesis space was such that for each $H\subseteq S_M$, $B\cup H\cup
e_{ctx}$ was stratified for each example $e$. This allowed atomic examples to
be represented as (positive) partial interpretations -- a positive example
$\asp{e}$ was represented as a partial interpretation $\langle
\answerset{e},\emptyset\rangle$, and a negative example $\asp{e}$ was
represented as a partial interpretation $\langle \emptyset,
\answerset{e}\rangle$.  Due to the differences in language biases used by ILASP
and $\delta$ILP, the hypothesis spaces of the two systems are not equivalent.

Due to the imbalance of positive and negative examples in many of the tasks, we
weight the positive examples at $w \times |E^-|/(|E^+|+|E^-|)$ and the negative
examples at $w \times |E^+|/(|E^+|+|E^-|)$, where in this experiment $w$ is
$100$. 
The weight for
each example class (positive or negative) is equal to $w$ multiplied by the
proportion of the whole set of examples which are in the other class. This
``corrects'' any imbalance between positive and negative examples (i.e.\ the
penalty for not covering a proportion of the positive examples is the same as
the penalty for not covering the same proportion of negative examples).  The
constant $w$ can be thought of as the difference in importance between the
hypothesis length and the number of examples covered. In these experiments we
chose 100, as it is high enough to ensure that coverage is considered far more
important than hypothesis length.

%

\begin{figure*}[t]

  \hfill
    \includegraphics[width=0.28\textwidth, trim={23mm 0mm 18mm 2mm},clip]{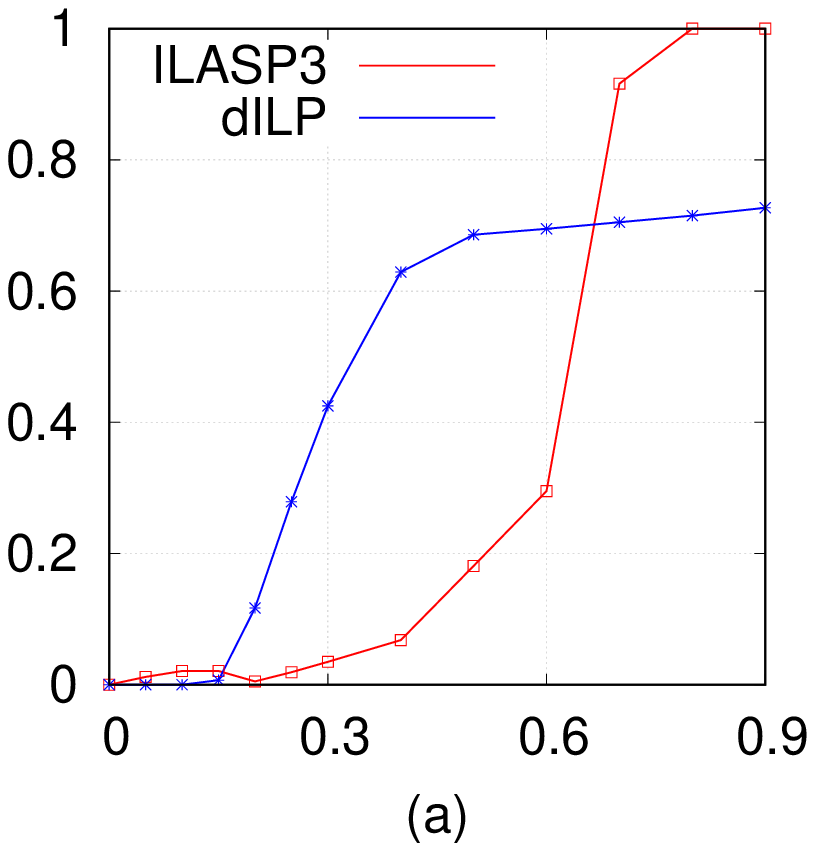}
  \hfill
    \includegraphics[width=0.28\textwidth, trim={23mm 0mm 18mm 2mm},clip]{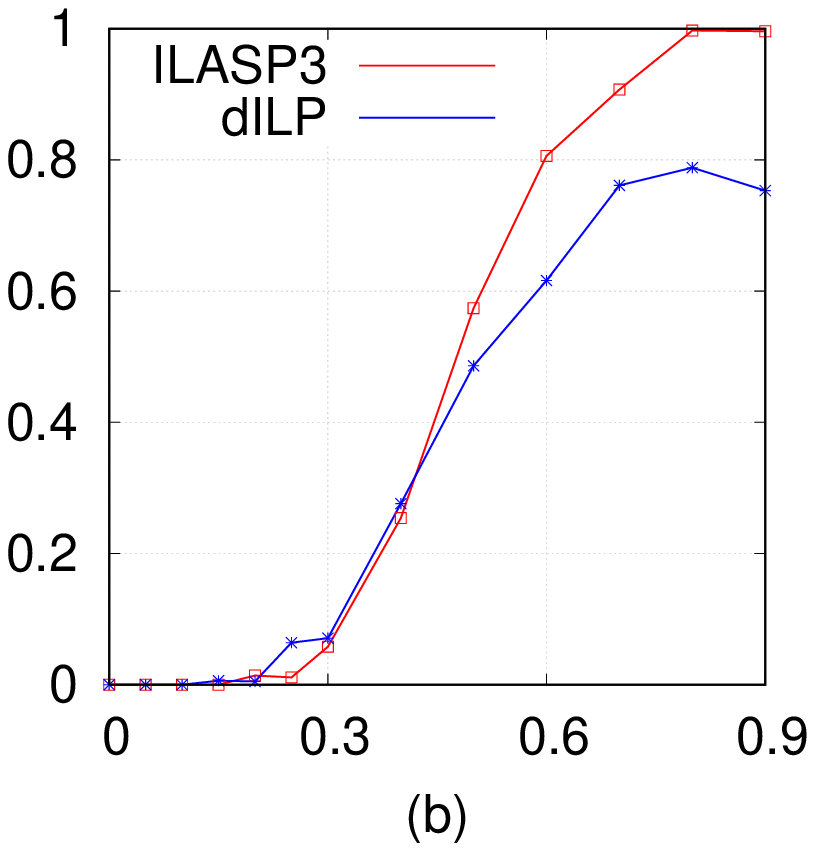}
  \hfill
    \includegraphics[width=0.28\textwidth, trim={23mm 0mm 18mm 2mm},clip]{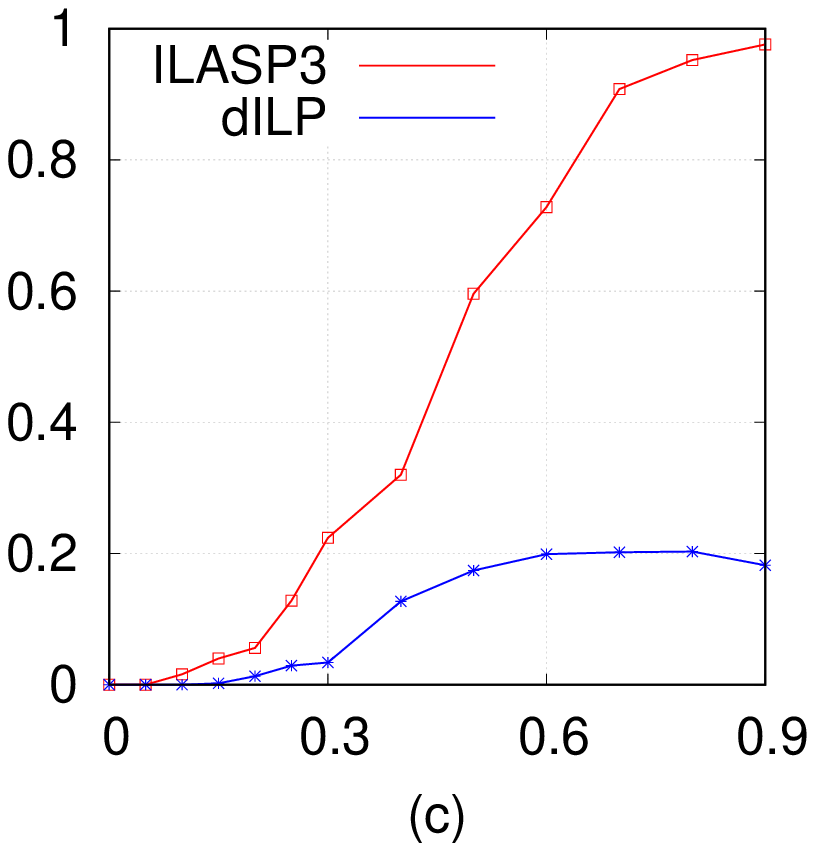}
  \hfill
  \!

  \vspace{2mm}

  \hfill
    \includegraphics[width=0.28\textwidth, trim={23mm 0mm 18mm 2mm},clip]{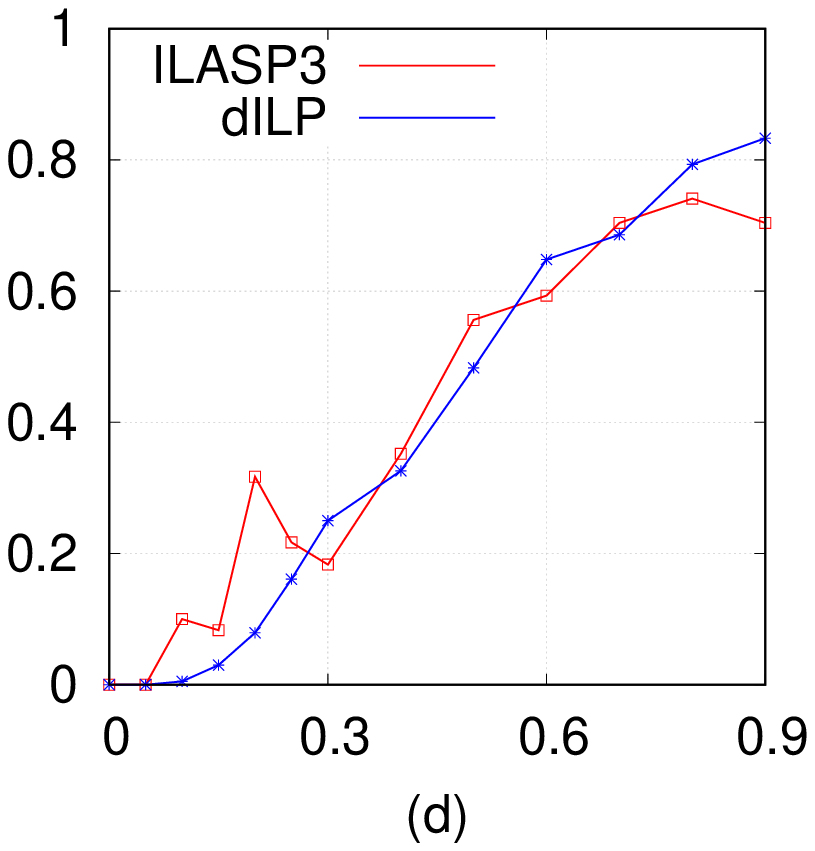}
  \hfill
    \includegraphics[width=0.28\textwidth, trim={23mm 0mm 18mm 2mm},clip]{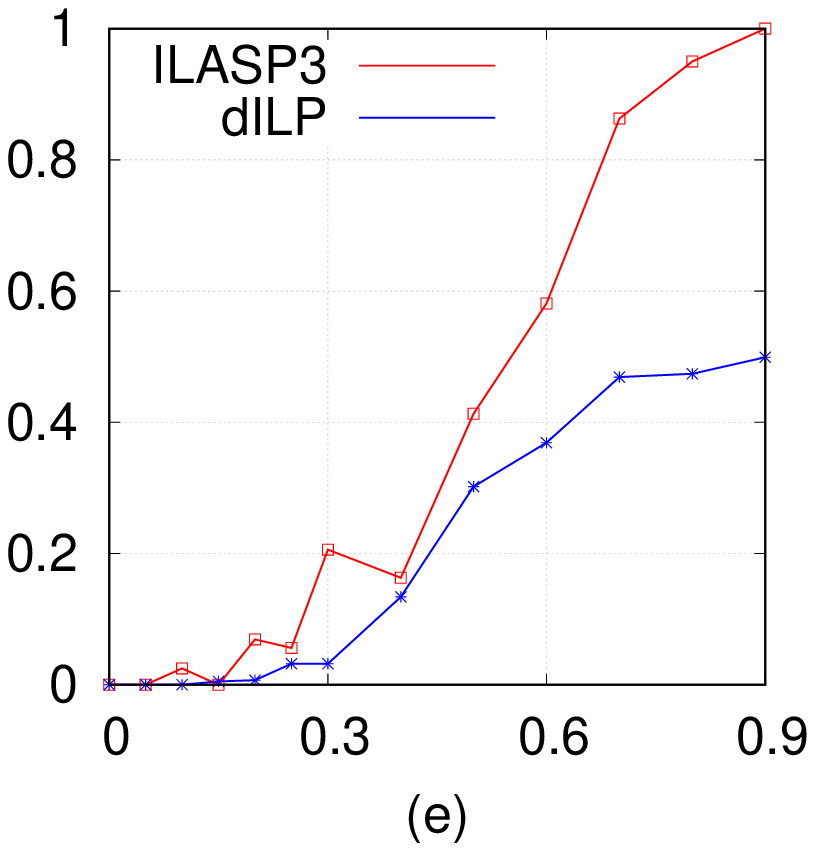}
  \hfill
  \!

    \caption{\label{fig:richard_error} A comparison of $\delta$ILP
    and ILASP3 on five datasets from \cite{evans2017learning}. Specifically the
    graphs correspond to the (a) predecessor, (b) less than, (c) member, (d)
    connected and (e) undirected edge experiments in~\cite{evans2017learning}.
    In each graph, the X and Y axes represent the noise level and mean squared
    error, respectively.}

\end{figure*}

Figure~\ref{fig:richard_error} shows the mean squared error of the two systems, where the results for $\delta$ILP are taken from \cite{evans2017learning}.  
In most tasks ILASP3 achieves similar results to $\delta$ILP when the noise is in the range of 0\% to 40\%. However,
at the other end of the scale (with more than 50\% noise), there are some tasks
where ILASP3 finds hypotheses with close to 100\% error, where $\delta$ILP's
error is much lower (less than 20\% in the ``member'' problem). We argue
that when the noisy examples outnumber the correctly labelled examples, the
learner should start learning the negation of the target hypothesis; for
instance, in the case of ``less than'', ILASP3 correctly learned the ``greater
than or equal to'' relation.
%
%
The ideal outcome of these kinds of experiments, where the proportion of noise
is varied, is that the learner achieves close to 0\% error until around 50\%
noise and close to 100\% error thereafter. This is roughly what seems to happen
for ILASP3 in the ``predecessor'', ``less than'', ``member'' and ``undirected
edge'' experiments. In ``predecessor'', the graph is less symmetric, with the
``crossover'' from low to high error occurring later. This is likely because
the hypothesis for ``not predecessor'' is longer than the hypothesis for
``predecessor''.  The failure of $\delta$ILP to get close to 100\% error in
many of the tasks (for example in ``member'', $\delta$ILP has an error
of less than 20\% with the noise level at 90\%) may indicate that the negation
of the target concept is not representable given the language bias used by
$\delta$ILP in these experiments, instead of $\delta$ILP being particularly
robust to noise. In some cases (such as ``member''), this is likely because the
negation of the concept requires negation as failure (which is not supported by
$\delta$ILP), but in others such as ``less than'', the negation of the concept
is expressible without negation as failure.

These results show that, on the ILP problems investigated
by~\cite{evans2017learning}, ILASP3 is certainly robust to noise, thus refuting
their claim that ILP systems cannot handle noise.

\section{Related Work}

Several other ILP systems use ASP solvers in the search for hypotheses. For
example, \cite{action_desc_asp} presented an early system for learning action
descriptions, where the search for inductive solutions is encoded in ASP. Many
of these systems, such as
\cite{action_desc_asp,raspal,bragaglia2015nonmonotonic,inspire} operate under a
\emph{brave} semantics -- the learned program should have at least one answer
set that satisfies some given properties (such as covering examples). But our
results on the generality of learning frameworks in~\cite{AIJ17} prove that
there are ASP programs that can be learned by our framework and that cannot be
learned by any of these systems. For example, brave induction systems cannot
learn hard or weak constraints, no matter what examples are given.

In a different line of research \citep{rrl_asp,rrl_asp2}, ASP solvers have also
been used together with relational reinforcement learning (RRL).
\cite{rrl_asp} present an architecture that combines RRL with ASP-based
inference. RRL and decision tree induction were
used to identify a set of candidate axioms. The candidates deemed to have the
highest likelihood are then represented in an ASP program, which is used for
planning.

Early approaches to relational learning (e.g.\
\cite{langley1987general,theory_rev_noise,cohen95}) were able to learn definite
rules from noisy data.
\cite{theory_rev_noise} presented
an ILP system based on theory revision, where hypotheses are only modified if
the modification leads to the additional coverage of more than one example. In
practice, however, it is possible that given a large enough set of examples,
two noisy examples may be covered by exactly the same class of hypotheses.
Under the $ILP_{LOAS}^{noise}$ approach, the penalty for not covering a set of
examples which forms a small proportion of examples is low, even if there are
multiple examples in this set. \cite{cohen95} introduces algorithms which learn
from noisy examples, learning one clause at a time. ILP systems which
iteratively learn single clauses, removing covered positive examples after each
iteration, are common when the target hypotheses are definite logic programs
(with no negation), as the programs being learned are \emph{monotonic}.
Learning \emph{non-monotonic} ASP programs with negation (allowing for the
learning of exceptions) requires a different approach
(\cite{ray2009nonmonotonic}). This is because, due to the non-monotonicity of
the learned programs, examples which are covered in one iteration may become
uncovered when further rules are learned.

In order to search for good hypotheses, ILP systems often use a cost function,
defined in terms of the coverage of the examples and the length of the
hypothesis (e.g.\
\cite{srinivasan2001aleph,muggleton1995inverse,bragaglia2015nonmonotonic}).
When examples are noisy, this cost function is sometimes combined with a notion
of maximum threshold, and the search is not for a hypothesis that minimises the
cost function, but for a hypothesis that does not fail to cover more than a
defined maximum threshold number of examples (e.g.
\cite{srinivasan2001aleph,hyper_n,raspal}).  In this way, once an acceptable
hypothesis (i.e.\ a hypothesis that covers a sufficient number of examples) is
computed the system does not search for a better one. As such, the
computational task is simpler, and therefore the time needed to compute a
hypothesis is shorter, but there may be other hypotheses which have a lower cost.
Furthermore, to guess the ``correct'' maximum threshold requires some idea of
how much noise there is in the given set of examples. For instance, one of the
inputs to the HYPER/N~\citep{hyper_n} system is the proportion of noise in the
examples. When the proportion of noise is unknown, too small a threshold could
result in the learning task being unsatisfiable, or in learning a hypothesis
that overfits the data. On the other hand, too high a threshold could result in
poor accuracy, as the hypothesis may not cover many of the examples.  Our
$ILP_{LOAS}^{noise}$ framework addresses the problem of computing optimal
solutions (with respect to the cost function) and in doing so does not require
knowledge a priori of the level of noise in the data.
Note that optimal hypotheses are not \emph{guaranteed} to outperform other
hypotheses on unseen data, but based on the evidence (i.e.\ the training
examples) they minimise the cost function, and so if the cost function is
reasonable, they should be more likely to be correct.  This can be seen in the
sentence chunking experiments, where we used ILASP with the same cost function
as Inspire (which does not guarantee minimising the cost function).  In future
work, we intend to explore alternative cost functions, and formalise what makes
a cost function ``reasonable'' in a given learning setting.

\section{Conclusion}

Learning interpretable knowledge is a key requirement for cognitive systems
that are required to communicate with each other, or with humans. Our research
addresses the problem of learning ASP programs, which are capable of
representing complex knowledge, such as defaults, exceptions and preferences.
In practice, cognitive systems are required to learn knowledge from noisy data
sources, where there is no guarantee that all examples are perfectly labelled.

This paper has presented the $ILP_{LOAS}^{noise}$ framework for learning ASP
from noisy examples and evaluated the ILASP3 system, designed to solve the
learning tasks of this framework. We used several synthetic datasets to show
that ILASP3 can learn even in the presence of high proportions noisy examples.
We have also tested ILASP3's performance on several datasets used by other ILP
systems The results of these experiments show that in most cases ILASP3 is able
to learn with a higher accuracy than the other systems, which, unlike ILASP3,
are not guaranteed to find optimal solutions of the tasks.

Although ILASP3 is a significant improvement on previous ILASP systems with
respect to the running time on noisy tasks, some scalability issues remain,
especially with the size of the hypothesis space. Every ILASP system begins by
computing the hypothesis space in full, which limits the feasible size of the
hypothesis space. In future work, we plan to design ILASP systems which do not
begin by computing the hypothesis space in full.

\begin{acknowledgements}
\noindent
We would like to thank the reviewers for their useful comments and suggestions.
\end{acknowledgements}

\vspace{-0.25in}

{\parindent -10pt\leftskip 10pt\noindent
\bibliographystyle{cogsysapa}
\bibliography{ms}

\begin{thebibliography}{32}
\expandafter\ifx\csname natexlab\endcsname\relax\def\natexlab#1{#1}\fi
\expandafter\ifx\csname url\endcsname\relax
  \def\url#1{{\path{\sloppy #1}}}\fi
\expandafter\ifx\csname urlprefix\endcsname\relax\def\urlprefix{From }\fi

\bibitem[{Abbasnejad et~al.(2013)Abbasnejad, Sanner, Bonilla, \&
  Poupart}]{abbasnejad2013learning}
Abbasnejad, E., Sanner, S., Bonilla, E.~V., \& Poupart, P. (2013).
\newblock Learning community-based preferences via dirichlet process mixtures
  of gaussian processes.
\newblock {\em Proceedings of the Twenty-third International Joint Conference
  on Artificial Intelligence, Beijing, China, August 3-9, 2013\/} (pp.
  1213--1219). {AAAI press}.

\bibitem[{Agirre et~al.(2016)Agirre, Gonzalez~Agirre, Lopez-Gazpio, Maritxalar,
  Rigau~Claramunt, \& Uria}]{chunking_dataset}
Agirre, E., Gonzalez~Agirre, A., Lopez-Gazpio, I., Maritxalar, M.,
  Rigau~Claramunt, G., \& Uria, L. (2016).
\newblock Semeval-2016 task 2: {I}nterpretable semantic textual similarity.
\newblock {\em Proceedings of the Tenth International Workshop on Semantic
  Evaluation\/} (pp. 512--524). Association for Computational Linguistics.

\bibitem[{Athakravi et~al.(2013)Athakravi, Corapi, Broda, \& Russo}]{raspal}
Athakravi, D., Corapi, D., Broda, K., \& Russo, A. (2013).
\newblock Learning through hypothesis refinement using answer set programming.
\newblock {\em Proceedings of the Twenty-third International Conference on
  Inductive Logic Programming, Rio de Janeiro, Brazil, August 28-30, 2013\/}
  (pp. 31--46). Springer.

\bibitem[{Balduccini(2007)}]{action_desc_asp}
Balduccini, M. (2007).
\newblock Learning action descriptions with a-prolog: Action language {C}.
\newblock {\em Logical Formalizations of Commonsense Reasoning (Technical
  Report SS-07-05)\/} (pp. 13--18). {AAAI}.

\bibitem[{Bragaglia \& Ray(2014)}]{bragaglia2015nonmonotonic}
Bragaglia, S., \& Ray, O. (2014).
\newblock Nonmonotonic learning in large biological networks.
\newblock {\em Proceedings of the Twenty-fourth International Conference on
  Inductive Logic Programming, Nancy, France, September 14-16, 2014\/} (pp.
  33--48). Springer.

\bibitem[{Cohen(1995)}]{cohen95}
Cohen, W.~W. (1995).
\newblock Fast effective rule induction.
\newblock {\em Machine Learning, Proceedings of the Twelfth International
  Conference on Machine Learning, Tahoe City, California, USA, July 9-12,
  1995\/} (pp. 115--123). Morgan Kaufmann.

\bibitem[{Evans \& Grefenstette(2018)}]{evans2017learning}
Evans, R., \& Grefenstette, E. (2018).
\newblock Learning explanatory rules from noisy data.
\newblock {\em Journal of Artificial Intelligence Research\/}, {\em 61\/},
  1--64.

\bibitem[{Gelfond \& Lifschitz(1988)}]{Gelfond1988}
Gelfond, M., \& Lifschitz, V. (1988).
\newblock The stable model semantics for logic programming.
\newblock {\em Proceedings of the Fifth International Conference and Symposium
  on Logic Programming, Seattle, Washington, USA, August 15-19, 1988 {(2}
  Volumes)\/} (pp. 1070--1080). {MIT} Press.

\bibitem[{Kamishima et~al.(2010)Kamishima, Kazawa, \&
  Akaho}]{kamishima2010survey}
Kamishima, T., Kazawa, H., \& Akaho, S. (2010).
\newblock A survey and empirical comparison of object ranking methods.
\newblock In J.~F{\"{u}}rnkranz \& E.~H{\"{u}}llermeier (Eds.), {\em Preference
  learning.\/},  181--201. Springer.

\bibitem[{Katzouris et~al.(2016)Katzouris, Artikis, \& Paliouras}]{OLED}
Katzouris, N., Artikis, A., \& Paliouras, G. (2016).
\newblock Online learning of event definitions.
\newblock {\em Theory and Practice of Logic Programming\/}, {\em 16\/},
  817--833.

\bibitem[{Kazmi et~al.(2017)Kazmi, Sch{\"u}ller, \& Sayg{\i}n}]{inspire}
Kazmi, M., Sch{\"u}ller, P., \& Sayg{\i}n, Y. (2017).
\newblock Improving scalability of inductive logic programming via pruning and
  best-effort optimisation.
\newblock {\em Expert Systems with Applications\/}, {\em 87\/}, 291--303.

\bibitem[{Kowalski \& Sergot(1986)}]{kowalski1986logic}
Kowalski, R.~A., \& Sergot, M.~J. (1986).
\newblock A logic-based calculus of events.
\newblock {\em New Generation Computing\/}, {\em 4\/}, 67--95.

\bibitem[{Langley(1987)}]{langley1987general}
Langley, P. (1987).
\newblock A general theory of discrimination learning.
\newblock {\em Production system models of learning and development\/}, (pp.
  99--161).

\bibitem[{Law et~al.(2014)Law, Russo, \& Broda}]{JELIAILASP}
Law, M., Russo, A., \& Broda, K. (2014).
\newblock Inductive learning of answer set programs.
\newblock {\em Proceedings of the Fourteenth European Conference on Logics in
  Artificial Intelligence, 2014, Funchal, Madeira, Portugal, September 24-26,
  2014.\/} (pp. 311--325). Springer.

\bibitem[{Law et~al.(2015{\natexlab{a}})Law, Russo, \& Broda}]{ILASP_system}
Law, M., Russo, A., \& Broda, K. (2015{\natexlab{a}}).
\newblock The {ILASP} system for learning answer set programs.
\newblock \url{https://www.doc.ic.ac.uk/~ml1909/ILASP}.

\bibitem[{Law et~al.(2015{\natexlab{b}})Law, Russo, \& Broda}]{ICLP15}
Law, M., Russo, A., \& Broda, K. (2015{\natexlab{b}}).
\newblock Learning weak constraints in answer set programming.
\newblock {\em Theory and Practice of Logic Programming\/}, {\em 15\/},
  511--525.

\bibitem[{Law et~al.(2015{\natexlab{c}})Law, Russo, \& Broda}]{simpleReduct}
Law, M., Russo, A., \& Broda, K. (2015{\natexlab{c}}).
\newblock {\em Simplified reduct for choice rules in {ASP}\/}.
\newblock Technical report, DTR2015-2, Department of Computing, Imperial
  College London, London.

\bibitem[{Law et~al.(2016)Law, Russo, \& Broda}]{ICLP16}
Law, M., Russo, A., \& Broda, K. (2016).
\newblock Iterative learning of answer set programs from context dependent
  examples.
\newblock {\em Theory and Practice of Logic Programming\/}, {\em 16\/},
  834--848.

\bibitem[{Law et~al.(2018)Law, Russo, \& Broda}]{AIJ17}
Law, M., Russo, A., \& Broda, K. (2018).
\newblock The complexity and generality of learning answer set programs.
\newblock {\em Artificial Intelligence\/}, {\em 259\/}, 110--146.

\bibitem[{McCreath \& Sharma(1997)}]{McCreath1997}
McCreath, E., \& Sharma, A. (1997).
\newblock {ILP} with noise and fixed example size: {A} {B}ayesian approach.
\newblock {\em Proceedings of the Fifteenth International Joint Conference on
  Artificial Intelligence, Nagoya, Japan, August 23-29, 1997, 2 Volumes\/} (pp.
  1310--1315). Morgan Kaufmann.

\bibitem[{Mooney \& Ourston(1991)}]{theory_rev_noise}
Mooney, R.~J., \& Ourston, D. (1991).
\newblock {\em Theory refinement with noisy data {(Technical Report AI
  91-153)}\/}.
\newblock Artificial Intelligence Laboratory, University of Texas at Austin.

\bibitem[{Muggleton(1991)}]{Muggleton1991}
Muggleton, S. (1991).
\newblock Inductive logic programming.
\newblock {\em New Generation Computing\/}, {\em 8\/}, 295--318.

\bibitem[{Muggleton(1995)}]{muggleton1995inverse}
Muggleton, S. (1995).
\newblock Inverse entailment and {P}rogol.
\newblock {\em New Generation Computing\/}, {\em 13\/}, 245--286.

\bibitem[{Oblak \& Bratko(2010)}]{hyper_n}
Oblak, A., \& Bratko, I. (2010).
\newblock Learning from noisy data using a non-covering {ILP} algorithm.
\newblock {\em Proceedings of the Twentieth International Conference on
  Inductive Logic Programming, 2010, Florence, Italy, June 27-30\/} (pp.
  190--197). Springer.

\bibitem[{Qomariyah \& Kazakov(2017)}]{qomariyah2017learning}
Qomariyah, N.~N., \& Kazakov, D. (2017).
\newblock Learning binary preference relations.
\newblock {\em Proceedings of the Fourth Joint Workshop on Interfaces and Human
  Decision Making for Recommender Systems, Como, Italy, August 27, 2017.\/}
  (pp. 30--34). CEUR.

\bibitem[{Ray(2009)}]{ray2009nonmonotonic}
Ray, O. (2009).
\newblock Nonmonotonic abductive inductive learning.
\newblock {\em Journal of Applied Logic\/}, {\em 7\/}, 329--340.

\bibitem[{Sakama \& Inoue(2009)}]{Sakama2009}
Sakama, C., \& Inoue, K. (2009).
\newblock Brave induction: {A} logical framework for learning from incomplete
  information.
\newblock {\em Machine Learning\/}, {\em 76\/}, 3--35.

\bibitem[{Sandewall \& Jansson(1993)}]{sandewall1993handling}
Sandewall, E., \& Jansson, C. (1993).
\newblock Handling imperfect data in inductive logic programming.
\newblock {\em Proceedings of the Fourth Scandinavian Conference on Artificial
  Intelligence, Stockholm, Sweden, May 4-7, 1993\/}. IOS Press.

\bibitem[{Sridharan \& Meadows(2017)}]{rrl_asp2}
Sridharan, M., \& Meadows, B. (2017).
\newblock An architecture for discovering affordances, causal laws, and
  executability conditions.
\newblock {\em Advances in Cognitive Systems\/}, {\em 5\/}, 1--16.

\bibitem[{Sridharan et~al.(2017)Sridharan, Meadows, \& G{\'{o}}mez}]{rrl_asp}
Sridharan, M., Meadows, B., \& G{\'{o}}mez, R. (2017).
\newblock What can {I} not do? towards an architecture for reasoning about and
  learning affordances.
\newblock {\em Proceedings of the Twenty-seventh International Conference on
  Automated Planning and Scheduling, Pittsburgh, Pennsylvania, USA, June 18-23,
  2017.\/} (pp. 461--470). {AAAI} Press.

\bibitem[{Srinivasan(2001)}]{srinivasan2001aleph}
Srinivasan, A. (2001).
\newblock The {Aleph} manual.
\newblock {\em Machine Learning at the Computing Laboratory, Oxford
  University\/}.

\bibitem[{Tjong Kim~Sang \& Buchholz(2000)}]{chunking}
Tjong Kim~Sang, E.~F., \& Buchholz, S. (2000).
\newblock Introduction to the {CoNLL}-2000 shared task: {C}hunking.
\newblock {\em Proceedings of the Fourth Conference on Computational Natural
  Language Learning, and the Second Workshop on Learning Language in Logic,
  Lisbon, Portugal, September 13-14, 2000\/} (pp. 127--132). Association for
  Computational Linguistics.

\end{thebibliography}

}

\clearpage

\appendix

\section{Details of the Representations used in the Hamilton Graph and Journey Preference Experiments}
\label{sec:representations}

\subsection{Hamiltonian Graphs}

The Hamilton graph learning tasks in \secref{hamilton} were aimed at learning
how to decide whether a graph is Hamiltonian of not.  The four node Hamiltonian
graph $G$ in figure~\ref{fig:ham} can be represented by the set of facts
$F_{G}$.

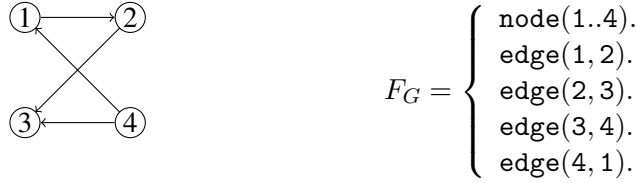
\begin{figure}[h]
  \begin{multicols}{2}
  \begin{center}

    \begin{tikzpicture}[scale=1.4,regular/.style={draw,ellipse, inner sep=0,minimum size=4mm}]
      \draw (1,2) node [regular] {1};
      \draw (2,2) node [regular] {2};
      \draw (1,1) node [regular] {3};
      \draw (2,1) node [regular] {4};
      \draw [->] (1.15,2) to (1.85,2);
      \draw [->] (1.90,1.90) to (1.10,1.10);
      \draw [->] (1.85,1) to (1.15,1);
      \draw [->] (1.90,1.10) to (1.10,1.90);
    \end{tikzpicture}
  \end{center}

\begin{math}
  F_{G} = \left\{
  \begin{array}{l}
  \asp{node(1\ruleend\ruleend4)\ruleend}\\
  \asp{edge(1,2)\ruleend}\\
  \asp{edge(2,3)\ruleend}\\
  \asp{edge(3,4)\ruleend}\\
  \asp{edge(4,1)\ruleend}
\end{array}\right.
\end{math}

  \end{multicols}
  \caption{\label{fig:ham} An example of a Hamiltonian Graph $G$ and its corresponding representation in ASP, $F_{G}$.}
\end{figure}

To decide whether a graph is Hamiltonian or not, we can use the program $H$
below:

{\small
\begin{verbatim}
reach(V0) :- in(1,V0).
reach(V1) :- in(V0,V1), reach(V0).
0 {in(V0,V1) } 1 :- edge(V0, V1).
:- node(V0), not reach(V0).
:- in(V0,V1), in(V0,V2), V1 != V2.
\end{verbatim}}

If for a graph $G$ and its corresponding  set of facts $F_{G}$, $G$ is
Hamiltonian if and only if $F_{G}\cup H$ is satisfiable.
In the $ILP_{LOAS}^{noise}$ tasks, we made use of (weighted) CDPI examples to
represent graphs. We did not use any background knowledge (i.e.\ the background
knowledge in each task was empty), and instead encoded the graphs in the
contexts of examples such as in the positive CDPI example:
$\langle\langle\emptyset,\emptyset\rangle,\lbrace
\asp{node(1\ruleend\ruleend4)\ruleend}\;\; \asp{edge(1,2)\ruleend}\;\;
\asp{edge(2,3)\ruleend}\;\; \asp{edge(3,4)\ruleend}\;\; \asp{edge(4,1)\ruleend}
\rbrace\rangle$, which represents the graph $G$.

\subsection{Journey Preferences}

We now describe the structure of journeys in the experiments in \secref{jp}. A
journey consists of a set of \emph{legs}. The attributes of journey legs in
these experiments were: $\asp{mode}$, which took one of the values $\asp{bus}$,
$\asp{car}$, $\asp{walk}$ or $\asp{bicycle}$; $\asp{distance}$, which took an
integer value between $\asp{1}$ and $\asp{20,000}$; and $\asp{crime\_rating}$.
As the crime ratings were not readily available from the simulator, we used a
randomly generated value between $\asp{1}$ and $\asp{5}$ for each journey leg.

In the experiments, we assumed that a user's preferences could be represented
by a set of weak constraints based on the attributes of a leg. $S_J$ denotes
the set of possible weak constraints that we used in the experiments, each of
which includes at most three literals (characterised by a mode bias which can be
found at \url{https://www.doc.ic.ac.uk/~ml1909/ILASP/}).
Most of these literals capture the leg's attributes, e.g., $\asp{mode(L, bus)}$
or $\asp{crime\_rating(L, R)}$ (if the attribute's values range over integers
this is represented by a variable, otherwise each possible value is used as a
constant). For the crime rating ($\asp{crime\_rating(L,R)}$), we also allow
comparisons of the form $R > \asp{c}$ where $\asp{c}$ is an integer from 1 to
4. The weight of each weak constraint is a variable representing the distance
of the leg in the body of the weak constraint, or 1 and the priority is 1, 2 or
3. One possible set of preferences is represented by the weak constraints
$W^*$.

$$W^* = \left\{\begin{array}{l}
  \asp{:\sim leg\_mode(L, walk), leg\_crime\_rating(L, C), C > 3\ruleend[1@3, L, C]}\\
  \asp{:\sim leg\_mode(L, car)\ruleend[1@2, L]}\\
  \asp{:\sim leg\_mode(L, walk), leg\_distance(L, D)\ruleend[D@1, L, D]}\\
\end{array}\right\}$$

These preferences represent that the user's top priority is to avoid walking
through areas with a high crime rating. Second, the user would like to avoid
driving, and finally, the user would like to minimise the total walking
distance of the journey.

We now describe how to represent the journey preferences scenario in
$ILP_{LOAS}^{noise}$. We assume that each journey is encoded as a set of
attributes of the legs of the journey; for example the journey $\answerset{
  distance(leg(1), 2000); distance(leg(2), 100); mode(leg(1), bus);
  mode(leg(2), walk)}$\break has two legs; in the first leg, the person must take a
  bus for 2,000m and in the second, he/she must walk 100m.
  Each of our learning tasks had an empty background knowledge. Each positive
  example in our tasks was a weighted CDPI $\langle e_{id}, \infty,
  \langle\langle\emptyset,\emptyset\rangle, J\rangle\rangle$, where $J$ is the
  set of facts representing a journey. The brave ordering examples were defined
  over pairs of the positive examples with appropriate ordering operators, and
  each with a penalty of 1.  Note that the positive examples are automatically
  satisfied as the (empty) background knowledge (combined with the context)
  already covers them. Also, as the background knowledge together with each
  context has exactly one answer set, the notions of brave and cautious
  orderings coincide; hence, we do not need cautious ordering examples for this
  task. Furthermore, since only weak constraints are being learned, the task
  also has no negative examples (a negative example would correspond to an
  invalid journey).

\section{ILASP Flags Used in the Experiments}\label{sec:flags}

ILASP3 has various optional features, used to improve the speed of the
algorithm on different kinds of learning task. \tblref{flags} shows the option
flags that were used in the calls to ILASP in each experiment. In addition to
these ``core'' ILASP options, in all but the Hamilton and CAVIAR experiments, a
flag was passed to run Clingo 5 with the option
\verb|--opt-strat=usc,stratify|.

\begin{table}[t]

  \vskip -0.1in

  \caption{\label{tbl:flags} The flags that were passed to ILASP when running
    the experiments in this paper.}

  \vspace{0.254cm}

  \centering

  \small

  \begin{tabular}{|c|c|}

  \hline

  Experiment & Flags\\\hline\hline

  Hamilton Graphs & \verb|-ng|\\\hline
  Journey Preferences & \verb|-ng -swc|\\\hline
  CAVIAR & \verb|-np -ng|\\\hline 
  Sentence Chunking & \verb|-np -ng --max-translate|\\\hline 
  Cars & \verb|-np -ni -ng -swc|\\\hline
  Sushi & \verb|-np -ni -ng -swc|\\\hline
    $\delta$ILP datasets & (no extra options)\\\hline

  \end{tabular}

  \vskip -0.06in

\end{table}

\section{Proofs}

In this section, we prove \thmref{complexity}, showing that
$ILP_{LOAS}^{noise}$ shares the same computational complexity as
$ILP_{LOAS}^{context}$ on the two decision problems of verification and
satisfiability. The proof relies on two propositions.

\begin{proposition}\label{prop:ctn}
  Deciding verification and satisfiability for a propositional
  $ILP_{LOAS}^{context}$ task both reduce polynomially to the same problem for
  a propositional $ILP_{LOAS}^{noise}$ task.
\end{proposition}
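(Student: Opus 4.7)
The plan is to exhibit a direct, essentially trivial, polynomial-time reduction that wraps every example of the source $ILP_{LOAS}^{context}$ task with an infinite penalty. Recall the observation made immediately after \defref{lnas}: any example whose penalty is $\infty$ must be covered by every inductive solution of the $\loasne$ task, since failing to cover it makes the score infinite. This already pins down the correct reduction.

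Concretely, given a propositional $ILP_{LOAS}^{context}$ task $T = \langle B, S_M, \langle E^+, E^-, O^b, O^c\rangle\rangle$, I would construct the propositional $\loasne$ task $T' = \langle B, S_M, \langle \widetilde E^+, \widetilde E^-, \widetilde O^b, \widetilde O^c\rangle\rangle$, where $\widetilde E^+$ is obtained by replacing every CDPI $e \in E^+$ with a weighted CDPI $\langle \mathit{id}_e, \infty, e\rangle$ using a fresh identifier $\mathit{id}_e$, and analogously for $E^-$, $O^b$, $O^c$. The background knowledge, hypothesis space, and the underlying partial interpretations, contexts, and orderings are all copied verbatim, so the output size is linear in the input size and the construction is obviously polynomial.

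The correctness argument splits into the two decision problems and is short. For verification, fix any $H \subseteq S_M$. By construction $uncov(H, T')$ contains precisely those wrapped examples whose underlying CDPI (resp.\ CDOE) is not covered (resp.\ bravely/cautiously respected) by $B \cup H$; equivalently, $uncov(H,T') = \emptyset$ iff $H$ covers every example of $T$. Since every $e_{pen}$ in $T'$ equals $\infty$, we get $pen(H, T') = 0$ when $uncov(H,T') = \emptyset$ and $pen(H,T') = \infty$ otherwise, so $\mathcal{S}(H, T') = |H|$ is finite iff $H$ is an inductive solution of $T$ in the $ILP_{LOAS}^{context}$ sense. Thus $H \in \loasne(T')$ iff $H$ is a solution of $T$, giving the verification reduction. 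Satisfiability then follows immediately: $T$ has a solution iff some $H \subseteq S_M$ covers all examples iff $\loasne(T')$ is non-empty.

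There is no real obstacle here; the only thing to be careful about is a formal one, namely that \defref{lnas} requires penalties to be \emph{positive integers or $\infty$}, so the use of $\infty$ is explicitly sanctioned and no rescaling or arithmetic trickery is needed. I would also note in passing that distinct identifiers are required by the definition but play no role in the argument, since a single uncovered $\infty$-penalty example already forces the score to $\infty$.
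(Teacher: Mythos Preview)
Your proposal is correct and follows essentially the same approach as the paper: wrap each example of the $ILP_{LOAS}^{context}$ task with an infinite penalty to obtain an $\loasne$ task, and observe that a hypothesis has finite score in the new task iff it covers every original example. The paper's proof is slightly terser, but the construction and the correctness argument are identical.
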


\begin{proof}
Let $T$ be the $ILP_{LOAS}^{context}$ task $\langle B, S_M, \langle E^+,
E^-, O^b, O^c\rangle\rangle$. Consider the $ILP_{LOAS}^{noise}$ task $T'
= \langle B, S_M, \langle E^+_2, E^-_2, O^b_2, O^c_2\rangle\rangle$,
where the examples are defined as follows:

\begin{itemize}
  \item $E^+_2 = \left\{ \langle e_{id}, \infty, e\rangle \middle| e \in E^+\right\}$
  \item $E^-_2 = \left\{ \langle e_{id}, \infty, e\rangle \middle| e \in E^-\right\}$
  \item $O^b_2 = \left\{ \langle o_{id}, \infty, o\rangle \middle| o \in O^b\right\}$
  \item $O^c_2 = \left\{ \langle o_{id}, \infty, o\rangle \middle| o \in O^c\right\}$
\end{itemize}

First note that $H \in ILP_{LOAS}^{context}(T) \Leftrightarrow \mathcal{S}(H,
T')$ is finite. Hence $H \in ILP_{LOAS}^{context}(T) \Leftrightarrow
H \in ILP_{LOAS}^{noise}(T')$. So verification for $ILP_{LOAS}^{context}$
reduces to verification for $ILP_{LOAS}^{noise}$. As this also means that
$ILP_{LOAS}^{context}(T) = \emptyset \Leftrightarrow ILP_{LOAS}^{noise}(T') =
\emptyset$, this also shows that satisfiability for $ILP_{LOAS}^{context}$
reduces to satisfiability for $ILP_{LOAS}^{noise}$.

\end{proof}

\begin{proposition}\label{prop:ntc}
  Deciding verification and satisfiability for a propositional
  $ILP_{LOAS}^{noise}$ task both reduce polynomially to the same problem for a
  propositional $ILP_{LOAS}^{context}$ task.
\end{proposition}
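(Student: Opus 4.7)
The plan is to exploit the fact that a hypothesis $H$ is an inductive solution of an $\loasne$ task $T$ exactly when its score $\mathcal{S}(H, T)$ is finite, and since $|H|$ is always finite, this is equivalent to saying that the penalty $pen(H, T)$ is finite. Because penalties on examples are either positive integers or $\infty$, $pen(H, T)$ is finite if and only if every uncovered example has a finite penalty, i.e.\ $H$ covers every example whose penalty is $\infty$. Finite-penalty examples therefore play no role whatsoever in determining whether $H$ is a solution — they only affect optimality.

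Given this observation, the reduction is straightforward. Let $T = \langle B, S_M, \langle E^+, E^-, O^b, O^c\rangle\rangle$ be an arbitrary propositional $\loasne$ task. I would define the $ILP_{LOAS}^{context}$ task
\[
T' = \langle B, S_M, \langle E^+_\infty, E^-_\infty, O^b_\infty, O^c_\infty\rangle\rangle,
\]
where $E^+_\infty = \{ e_{cdpi} \mid \langle e_{id}, e_{pen}, e_{cdpi}\rangle \in E^+,\; e_{pen} = \infty\}$, and $E^-_\infty$, $O^b_\infty$, $O^c_\infty$ are defined analogously by stripping identifiers and penalties from the infinite-penalty examples. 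This construction is clearly polynomial in the size of $T$, since we simply discard all examples with finite penalty and unwrap the remaining ones.

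The correctness argument reduces to a single equivalence chain: $H \in \loasne(T) \iff \mathcal{S}(H, T) < \infty \iff pen(H, T) < \infty \iff \text{every } \infty\text{-penalty example of } T \text{ is covered by } B \cup H \iff H$ covers every example of $T' \iff H \in ILP_{LOAS}^{context}(T')$. This immediately gives the reduction for verification, since deciding whether a given $H$ is a solution of $T$ is equivalent to deciding whether it is a solution of $T'$. For satisfiability, the same equivalence yields $\loasne(T) = \emptyset \iff ILP_{LOAS}^{context}(T') = \emptyset$.

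There is no real obstacle here beyond writing out the equivalence carefully; the subtle point to emphasise is simply that the $\loasne$ notion of ``solution'' is liberal (any finite score will do), which means finite-penalty examples are irrelevant to both decision problems under consideration and can be safely dropped. Combined with \propref{ctn}, this proposition completes the proof of \thmref{complexity}, since it transfers the known $DP$-completeness of verification and $\Sigma^P_2$-completeness of satisfiability from the $ILP_{LOAS}^{context}$ setting to $ILP_{LOAS}^{noise}$.
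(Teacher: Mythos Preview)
Your proposal is correct and follows essentially the same approach as the paper: both construct $T'$ by discarding all finite-penalty examples and stripping identifiers and penalties from the remaining (infinite-penalty) ones, then argue that $\loasne(T) = ILP_{LOAS}^{context}(T')$, which immediately yields both reductions. Your write-up is in fact slightly more explicit than the paper's in spelling out the equivalence chain $\mathcal{S}(H,T) < \infty \iff pen(H,T) < \infty \iff$ every $\infty$-penalty example is covered.
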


\begin{proof}

Let $T$ be the $ILP_{LOAS}^{noise}$ task $\langle B, S_M, \langle
E^+, E^-, O^b, O^c\rangle\rangle$. Consider the
$ILP_{LOAS}^{context}$ task $T' = \langle B, S_M, \langle E^+_2,
E^-_2, O^b_2, O^c_2\rangle\rangle$, where the examples are defined
as follows:

\begin{itemize}
  \item $E^+_2 = \left\{ \langle e_{pi}, e_{ctx}\rangle \middle| \langle
    e_{id}, e_{pen}, \langle e_{pi}, e_{ctx}\rangle\rangle \in E^+, e_{pen} =
    \infty\right\}$
  \item $E^-_2 = \left\{ \langle e_{pi}, e_{ctx}\rangle \middle| \langle
    e_{id}, e_{pen}, \langle e_{pi}, e_{ctx}\rangle\rangle \in E^-, e_{pen} =
    \infty\right\}$
  \item $O^b_2 = \left\{ o_{ord} \middle| \langle o_{id}, o_{pen},
    o_{ord}\rangle \in O^b, o_{pen} = \infty\right\}$
  \item $O^c_2 = \left\{ o_{ord} \middle| \langle o_{id}, o_{pen},
    o_{ord}\rangle \in O^c, o_{pen} = \infty\right\}$
\end{itemize}

$\forall H \subseteq S_M$, $H \in ILP_{LOAS}^{context}(T')$ if and only if $H$
covers all examples in $T$ that have a finite penalty.  Hence,
$ILP_{LOAS}^{context}(T') = ILP_{LOAS}^{noise}(T)$. This means that both
verification and satisfiability for $ILP_{LOAS}^{noise}$ reduce to verification
and satisfiability for $ILP_{LOAS}^{context}$ (as $H \in
ILP_{LOAS}^{noise}(T)\Leftrightarrow H \in ILP_{LOAS}^{context}(T')$ and
$ILP_{LOAS}^{noise}(T)=\emptyset\Leftrightarrow
ILP_{LOAS}^{context}(T')=\emptyset$).

\end{proof}

\pagebreak{}

\noindent We can now prove \thmref{complexity}.

\vspace{2mm}

\noindent \textbf{\thmref{complexity}}
\begingroup $ $

\begin{enumerate}
  \item Deciding verification for an arbitrary propositional $ILP_{LOAS}^{noise}$ task is $DP$-complete
  \item Deciding satisfiability for an arbitrary propositional $ILP_{LOAS}^{noise}$ task is $\Sigma^P_2$-complete
\end{enumerate}
\endgroup

\begin{proof}{$ $}

  \begin{enumerate}
    \item
      As Propositions~\ref{prop:ctn}~and~\ref{prop:ntc} show polynomial
      reductions in both directions from this problem to deciding verification
      for an arbitrary propositional $ILP_{LOAS}^{context}$ task, it remains to
      show that the corresponding decision problem for $ILP_{LOAS}^{context}$
      is $DP$-complete. This was shown in~\citep{AIJ17}.
    \item
      Similarly, as Propositions~\ref{prop:ctn}~and~\ref{prop:ntc} show
      polynomial reductions in both directions from this problem to deciding
      satisfiability for an arbitrary propositional $ILP_{LOAS}^{context}$
      task, it remains to show that the corresponding decision problem for
      $ILP_{LOAS}^{context}$ is $DP$-complete. This was shown in~\citep{AIJ17}.
  \end{enumerate}

\end{proof}

\end{document}